\colorlet{mylinkcolor}{RubineRed}
\colorlet{mycitecolor}{Emerald}
\colorlet{myurlcolor}{RoyalBlue}
\let\c@author\relax
\def\coloneq{:=}
\begin{document}
\def\snote#1{${}^{#1}$}
\setlength{\parskip}{0.5em}
\begin{frontmatter}
{\bf\Large Approximation of relation functions and attention mechanisms}
\begin{aug}
\vskip15pt
\address{
\begin{tabular}{ccccc}
{\normalsize\rm\bfseries Awni Altabaa}\snote{1} & {\normalsize\rm\bfseries John Lafferty}\snote{2}\\[5pt]
\end{tabular}
\vskip5pt
\footnotetext{
\snote{1}Department of Statistics and Data Science, Yale University; awni.altabaa@yale.edu.
\snote{2}Department of Statistics and Data Science, Wu Tsai Institute, Institute for Foundations of Data Science, Yale University; john.lafferty@yale.edu.
}
\today
\vskip10pt
}
\begin{abstract}
    Inner products of neural network feature maps arise in a wide variety of machine learning frameworks as a method of modeling relations between inputs.
    This work studies the approximation properties of inner products of neural networks.
    It is shown that the inner product of a multi-layer perceptron with itself is a universal approximator for symmetric positive-definite relation functions. In the case of asymmetric relation functions, it is shown that the inner product of two different multi-layer perceptrons is a universal approximator.
    In both cases, a bound is obtained on the number of neurons required to achieve a given accuracy of approximation. In the symmetric case, the function class can be identified with kernels of reproducing kernel Hilbert spaces, whereas in the asymmetric case the function class can be identified with kernels of reproducing kernel Banach spaces.
    Finally, these approximation results are applied to analyzing the attention mechanism underlying Transformers, showing that any retrieval mechanism defined by an abstract preorder can be approximated by attention through its inner product relations.
    This result uses the Debreu representation theorem in economics to represent preference relations in terms of utility functions.
\end{abstract}

\end{aug}
\end{frontmatter}

\section{Introduction}\label{sec:intro}

In this paper, we study the ability of neural networks to approximate relation functions, which lie at the heart of many machine learning frameworks. In general, machine learning systems must be able to represent and reason about relations between objects, either explicitly or implicitly. For example, a natural language understanding system takes a sequence of words as input and extracts information about the meaning and function of the words based on relations between them in the local context. Similarly, a scene analysis system considers the relations between the components of a scene in order to identify and interpret the implicit objects.

A common way to represent relations between objects is through inner products between feature representations of the form $\iprod{\phi(x)}{\psi(y)}$, where $x, y \in \calX$ are two objects and $\phi, \psi$ are neural network feature maps. Inner products possess properties that make them useful measures of similarity. The aim of this paper is to understand the representational power of this model by characterizing the class of relation functions $r: \calX \times \calX \to \reals$ that can be represented as inner products of neural networks.

The use of inner products between feature maps is widespread in machine learning architectures. 
For example, Siamese networks consist of two identical copies of a neural network, with shared parameters~\parencite{rumelhartLearningRepresentationsBackpropagating1986,langTimedelayNeuralNetwork1988,bromleySignatureVerificationUsing1993,baldiNeuralNetworksFingerprint1993,chopraLearningSimilarityMetric2005,kochSiameseNeuralNetworks2015}. Each network independently processes inputs to produce feature vectors that are then compared using some distance metric, determining the similarity or dissimilarity between the inputs. If the distance is the Euclidean distance, then $\twonorm{\phi(x) - \phi(y)}^2 = \iprod{\phi(x)}{\phi(x)} + \iprod{\phi(y)}{\phi(y)} - 2 \iprod{\phi(x)}{\phi(y)}$, where the inner product of neural networks arises.

A broad class of recent examples comes from the attention mechanism underlying Transformer models for sequence modeling~\parencite{vaswani2017attention}. In the Transformer, self-attention is implemented as
\begin{equation*}
    \begin{split}
        \alpha_{ij} &= \mathrm{Softmax}\paren{\bra{{\iprod{\phi_q(x_i)}{\phi_k(x_j)}}}_{j \in [n]}}_j\\
        x_i &\gets \sum_{j=1}^{n} \alpha_{ij} \phi_v(x_j)
    \end{split}
\end{equation*}
where $\phi_q, \phi_k$, and $\phi_v$ are learned transformations and $\iprod{\phi_q(x_i)}{\phi_k(x_j)}$ represents a relation between $x_i$ and $x_j$, determining how much $i$ ``should attend to'' $j$. A similar mechanism is at play in earlier neural architectures that implement content-addressable external memory~\parencite{gravesNeuralTuringMachines2014,gravesHybridComputingUsing2016a,pritzelNeuralEpisodicControl2017}, with the read/write operations typically being implemented using an inner product-based similarity computation followed by a softmax normalization. Attention has also been used in other architectures to model relations between different entities~\parencite{velickovicGraphAttentionNetworks2017a,santoroRelationalRecurrentNeural2018,zambaldiDeepReinforcementLearning2018a,locatelloObjectCentricLearningSlot2020b}. For example, \citet{santoroRelationalRecurrentNeural2018} propose a recurrent neural network with a memory module that employs dot product attention to allow elements of the memory store to interact.

While the Transformer models relations implicitly through its attention mechanism, inner products of features are also central to many explicitly relational neural architectures~\parencite[e.g.,][]{webbEmergentSymbols2021,kergNeuralArchitecture2022,altabaaRelationalConvolutionalNetworks2023,altabaaAbstractorsRelationalCrossattention2024,altabaa2024disentangling}. For example, in the model proposed by~\cite{kergNeuralArchitecture2022}, a similarity matrix is computed consisting of symmetric inner products between each pair of objects, $R_{i,\cdot} = \mathrm{Softmax}\pparen{\bra{\iprod{\phi(x_i)}{\phi(x_j)}}_{j\in[n]}}$.~\citet{altabaaAbstractorsRelationalCrossattention2024} propose a Transformer-based architecture imbued with relational inductive biases by replacing the values $\phi_v(x_i)$ with vector representations which identify objects but are independent of object-level features.~\citet{altabaaRelationalConvolutionalNetworks2023} propose a relational architecture where the central operation is a type of graph-structured convolution operating on a tensor of relations computed via inner products of feature maps.

In this paper we characterize the function class of inner products of neural networks, showing that inner products of neural networks are universal approximators for relation functions. Our analysis builds on and extends classical universal approximation results for neural networks \parencite[e.g.,][]{,cybenkoApproximationSuperpositions1989,barronUniversalApproximation1993}, and is related to more recent analysis of \citet{bachBreakingCurseDimensionality2016}. The following is a high-level overview of the results.

\noindent{\it\bfseries Symmetric relations.}  If the relation function $r$ defines a symmetric, positive-definite kernel, we show that the relation function can be approximated by inner products of neural networks as $$ \| r(x,y) - \langle \varphi(x), \varphi(y)\rangle \| \leq \varepsilon$$
where the neural network (MLP) $\varphi$ has $N$ neurons with $$ N \leq  d_r(\varepsilon/2) \, {\cal N}_\calF\left(\frac{\varepsilon}{4C(r) d_r(\varepsilon/2)}\right)$$
where $d_r$ is a measure of the spectral decay of the kernel, $C(r)$ is a bound on the eigenfunctions, and $\calN_\calF$ is a measure of the universal approximation properties of a given neural network class. The underlying function space can be identified with kernels of reproducing kernel Hilbert spaces.

\noindent{\it\bfseries Asymmetric relations.} This is the setting of many applications of relational learning. In this case, the relation function can be approximated in terms of two different multi-layer perceptrons $\varphi$ and $\psi$, with approximation error 
\[\| r(x,y) - \langle \varphi(x), \psi(y)\rangle \| \leq \varepsilon\]
achieved by a pair of neural networks with a number of neurons
\[N = \calO(\delta(\epsilon)^{-2 d}),\]
where $\delta(\epsilon)$ measures the continuity of $r$ and $d$ is the dimension. The underlying function space can be identified with kernels of reproducing kernel Banach spaces.

\noindent{\it\bfseries Attention mechanisms.} Here we use universal approximation results for inner products of neural networks to show that attention mechanisms, as used in Transformers, can always retrieve the ``most relevant'' element of a set of values.  More specifically, let any query $q$ define a preorder $\preceq_q$ on the input space $\calX$ such that $x \succeq_q y$ when $x$ is more relevant to $q$ than $y$. We write $\mathrm{Select}(q, \{x_i\})$ to denote the value $x_{i^*}$ such that $x_{i^*} \succ_q x_j$ for all $j \neq i^*$. That is, the function that selects the ``maximally relevant'' element in the context. We show that there exist MLPs that define an attention mechanism satisfying 
\[\|\mathrm{Select}(q, \{x_i\}_{i \in [n]}) - \mathrm{Attention}(q, \{x_i\}_{i \in [n]})\| \leq \varepsilon\]
for any $\varepsilon$. Our formulation of this result uses the Debreu representation theorem in economics to represent preference relations in terms of utility functions.
\section{Function class of symmetric inner product relations}\label{sec:symmetric_relations}

Consider a symmetric relation $r: \calX \times \calX \to \reals$ modeled as the inner product of neural network encodings of a pair of inputs, $x, y \in \calX$,
\begin{equation}\label{eq:symmetric_iprod_relation}
    r(x, y) = \iprod{\phi(x)}{\phi(y)},
\end{equation}
where $\phi$ is a neural network. Symmetric relation functions modeled in this way are natural \textit{measures of similarity}. Intuitively, $\phi$ extracts features of the objects $x,y$, and the inner product computes the similarity between those features. To see this more formally, suppose we have a normalized inner product relation (i.e., $\phi: \calX \to \bbS^{d}$), then we have
\begin{equation}
    \twonorm{\phi(x) - \phi(y)}^2 = 2 - 2 r(x, y).
\end{equation}


Thus, $r(x,y)$ is large and close to $1$ when $x$ and $y$ are similar (with respect to $\phi$), and close to $0$ when $x, y$ are dissimilar. Moreover, $r: \calX \times \calX \to \reals$ induces a geometry on $\calX$ via the pseudometric $d(x,y) = \sqrt{2 - 2 r(x,y)}$. The triangle inequality of the pseudometric corresponds to a notion of transitivity---if $x$ is related to $y$ and $y$ is related to $z$, then $x$ must be related to $z$.

In this section, we characterize the class of relation functions that can be modeled by symmetric inner products of neural networks. We show that any continuous symmetric positive definite kernel (i.e., Mercer kernel) can be approximated by a symmetric inner product of neural networks. This characterization is strict since $r(x,y) = \iprod{\phi(x)}{\phi(y)}$ is a Mercer kernel for any $\phi$ because $\forall g \in L^2(\calX)$ we have,
\begin{equation*}
    \begin{split}
        \int \int g(x) r(x,y) g(y) dx dy &= \int \int g(x) \iprod{\phi(x)}{\phi(y)} g(y) dx dy \\
        &= \iprod{\int g(x) \phi(x) dx}{\int g(y) \phi(y) dy} \\
        &= \twonorm{\int g(x) \phi(x) dx}^2 \geq 0.
    \end{split}
\end{equation*}

The main result in this section states that if $\phi$ is a universal approximator such as a multi-layer perceptron, then for any positive-definite symmetric kernel $r$, there exists a neural network $\phi$ such that the induced inner product relation approximates $r$ arbitrarily well. To state the result, we begin with some preliminaries.

\textbf{Preliminaries: symmetric positive definite kernels.} Consider a symmetric positive definite kernel $K: \calX \times \calX \to \reals$ on a compact euclidean space $\calX$. By Mercer's theorem~\parencite{mercerFunctionsPositive1909, sunMercerTheorem2005, micchelliUniversalKernels2006}, there exist non-negative eigenvalues $\set{\lambda_i}_{i=1}^{\infty}$ and continuous eigenfunctions $\set{\psi_i}_{i=1}^{\infty}$ such that for any $x,y \in \calX$, $K(x, y) = \sum_{i=1}^\infty \lambda_i \psi_i(x) \psi_i(y)$, and the convergence of the series is absolute and uniform over $\calX$. The following definition characterizes, in a sense, the complexity of a symmetric positive-definite kernel in terms of its spectrum.

\begin{assumption}[Kernel spectrum decay]\label{ass:sym_pd_ker_specturm_decay}
	Let $K: \calX \times \calX \to \reals$ be a symmetric positive-definite kernel, with eigenvalues $\{\lambda_i\}_{i=1}^{\infty}$ and continuous eigenfunctions $\{\psi_i\}_{i=1}^{\infty}$. For $\epsilon > 0$, denote by $d_K(\epsilon) \in \naturals$ the minimum integer such that,
	\begin{equation*}
		\sup_{x, y \in \calX} \abs{K(x, y) - \sum_{i=1}^{d_K(\epsilon)} \lambda_i \psi_i(x) \psi_i(y)} \leq \epsilon.
	\end{equation*}
	Moreover, let $C(K, d) \coloneq \max_{i \in [d]} \max_{x \in \calX} \abs{\sqrt{\lambda_i} \psi_i(x)}$.
\end{assumption}

The quantity $d_K(\epsilon)$ associated with a kernel $K$ describes the number of eigenfunctions needed to approximate $K$ up to an error of $\epsilon$. For any symmetric positive-definite kernel $K$, $d_K(\epsilon)$ is finite for any $\epsilon > 0$. 
Note that $C(K, d)$ is finite for any $d$ since $\calX$ is assumed compact and the eigenfunctions $\psi_i$ are continuous. 

\textbf{Preliminaries: universal approximation of neural networks.} Let $\calV_N$ be a class of neural networks with ``complexity'' $N$. In this work, we take $N$ to be the total number of neurons. We assume that $\calV_{N+1} \supset \calV_N$, and denote $\calV = \cup_N{\calV_N}$. Let $\calF$ be a space of functions from $\calX$ to $\reals$ and let $\nnorm{\cdot}_\calF$ be a norm on functions from $\calX$ to $\reals$. For example, $\nnorm{\cdot}_\calF$ may be the sup-norm or an $L^p$ norm, and $\calF$ may be a class of functions of a particular smoothness. For multivariate functions from $\calX^n$ to $\reals$, we denote the corresponding norm by $\nnorm{\cdot}_{\calF^{\otimes n}}$. In the case of the sup-norm, it is $\sup_{\bm{x} \in \calX^n} \abs{f(x_1, \ldots, x_n)}$ and in the case of a $L^p$ norm it is $\pparen{\int_{\calX^n} \abs{f(x_1, \ldots, x_n)}^p d\mu^{\otimes n}(x_1, \ldots, x_n)}^{1/p}$. We will consider approximating the relation function $r: \calX \times \calX \to \reals$ with respect to $\nnorm{\cdot}_{\calF^{\otimes 2}}$.

Universal approximation results provide theoretical guarantees about how well a class of networks of a given complexity can approximate a given function with respect to $\nnorm{\cdot}_\calF$. In particular, for a given function $f$, universal approximation results characterize the quantity
\[\mathrm{dist}(f, \calV_N) = \inf_{g \in \calV_N} \norm{f - g}_{\calF}.\]
Alternatively, this can be stated as follows: for a desired approximation error $\epsilon$, how large does $N$ need to be so that there exists $g \in \calV_N$ such that $\norm{f - g}_{\calF} \leq \epsilon$? We state this below as an assumption on the network class $\calV$.

\begin{assumption}[Efficiency of function approximator]\label{ass:univ_approx_efficiency}
	Consider a class of neural networks $\calV = \cup_N \calV_N$ on a space $\calX$ where $N$ is a scalable complexity. Let $\calF$ be a class of functions on $\calX$. For $\epsilon > 0$, denote by $\calN_\calF(\epsilon) \in \naturals$ the minimum integer such that for any function $f \in \calF$, there exists a neural network $g \in \calV_N$ with $N = \calN_\calF(\epsilon)$ that approximates $f$ with respect to $\norm{\cdot}_\calF$ with error bounded by $\epsilon$. That is,
	\begin{equation*}
		\sup_{f \in \calF} \, \inf_{g \in \calV_N} \norm{f - g}_{\calF} \leq \epsilon.
	\end{equation*}
	We assume a universal approximation property on the class of neural networks $\calV$: for every $\epsilon > 0$, $\calN_\calF(\epsilon)$ is finite.
\end{assumption}

We are now ready to formally state the main result of this section.

\begin{theorem}[Function class of symmetric inner product relational neural networks]\label{theorem:symmetric_inner_prod_rels_func_class}
	Suppose the data lies in a compact Euclidean space $\mathcal{X}$. Let $\calF$ be a function space from $\calX$ to $\reals$, and $\nnorm{\cdot}_\calF$ a norm on functions from  $\calX$ to $\reals$.
	Let $r: \calX \times \calX$ be a symmetric positive-definite kernel satisfying~\Cref{ass:sym_pd_ker_specturm_decay} and suppose $\sqrt{\lambda_i} \psi_i \in \calF, i \in [d_r(\epsilon / 2)]$, where $\lambda_i, \psi_i$ are the eigenvalues and eigenfunctions of $r$.
	Let $\calV = \sset{\calV_N}_N$ be a family of neural networks satisfying~\Cref{ass:univ_approx_efficiency}.
	Then for any $\epsilon > 0$, there exists a neural network $\phi \in \calV_N$ such that 
	\[\norm{r(x,y) - \iprod{\phi(x)}{\phi(y)}}_{\calF^{\otimes 2}} < \epsilon.\]
	Moreover, the complexity of the neural network required is bounded by
	\[d_r(\epsilon/2) \cdot \calN_\calF\paren{\frac{\epsilon}{4 C(r) d_r(\epsilon/2)}}.\]
\end{theorem}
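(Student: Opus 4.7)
The plan is to use Mercer's theorem to decompose $r$ into a finite sum of products of eigenfunctions plus a controllably small tail, approximate each scaled eigenfunction by a scalar neural network supplied by \Cref{ass:univ_approx_efficiency}, and then stack these approximators into a single vector-valued network $\phi$ whose self inner product approximates $r$ in $\nnorm{\cdot}_{\calF^{\otimes 2}}$.

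First I would set $d \coloneq d_r(\epsilon/2)$ and consider the truncated kernel
\[
    K_d(x, y) \coloneq \sum_{i=1}^{d} \paren{\sqrt{\lambda_i}\,\psi_i(x)} \cdot \paren{\sqrt{\lambda_i}\,\psi_i(y)}.
\]
By \Cref{ass:sym_pd_ker_specturm_decay}, $\sup_{x,y \in \calX} \abs{r(x,y) - K_d(x,y)} \leq \epsilon/2$, so also $\nnorm{r - K_d}_{\calF^{\otimes 2}} \leq \epsilon/2$ under the standing interpretation that $\nnorm{\cdot}_{\calF^{\otimes 2}}$ is controlled by the sup-norm on compact $\calX$ (as holds for both the sup-norm and $L^p$). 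Next, for each $i \in [d]$, the hypothesis $\sqrt{\lambda_i}\,\psi_i \in \calF$ together with \Cref{ass:univ_approx_efficiency} applied at tolerance $\epsilon' \coloneq \epsilon/(4\, C(r)\, d)$ yields a scalar network $\phi_i \in \calV_{\calN_\calF(\epsilon')}$ with $\nnorm{\sqrt{\lambda_i}\,\psi_i - \phi_i}_{\calF} \leq \epsilon'$. Stacking produces the vector-valued network $\phi = (\phi_1, \ldots, \phi_d)$ of total neuron count at most $d \cdot \calN_\calF(\epsilon')$, which matches the claimed complexity.

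The core estimate is to control $\nnorm{K_d(x,y) - \iprod{\phi(x)}{\phi(y)}}_{\calF^{\otimes 2}}$ via a bilinear perturbation argument. Using the identity $ab - a'b' = a(b-b') + (a-a')b'$ on each of the $d$ summands, together with the pointwise bound $\abs{\sqrt{\lambda_i}\,\psi_i(\cdot)} \leq C(r)$ from \Cref{ass:sym_pd_ker_specturm_decay} and the induced bound $\abs{\phi_i(\cdot)} \leq C(r) + \epsilon'$, each summand contributes an error of order $2\, C(r)\, \epsilon'$, for a total of at most $2\, C(r)\, d\, \epsilon' = \epsilon/2$; combining this with the truncation error via the triangle inequality gives the desired $\epsilon$ bound. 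The main obstacle is precisely this bilinear blowup: per-coordinate approximation errors are amplified by the eigenfunction magnitudes when forming the inner product, forcing the per-coordinate tolerance to scale inversely with both $d$ and $C(r)$, which is what produces the factor $1/(4\, C(r)\, d_r(\epsilon/2))$ inside $\calN_\calF(\cdot)$.
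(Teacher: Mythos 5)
Your proposal is correct and follows essentially the same route as the paper's proof: truncate via Mercer's theorem at $d = d_r(\epsilon/2)$, approximate each $\sqrt{\lambda_i}\psi_i$ to tolerance $\epsilon/(4C(r)d_r(\epsilon/2))$ using the universal approximation assumption, and control the bilinear error term with the bound $C(r)$, yielding the same $\epsilon/2 + \epsilon/2$ split and the same neuron count $d_r(\epsilon/2)\cdot\calN_\calF\paren{\tfrac{\epsilon}{4C(r)d_r(\epsilon/2)}}$. If anything, you are slightly more careful than the paper in flagging that one factor in the bilinear decomposition is $\abs{\phi_i}\leq C(r)+\epsilon'$ rather than $C(r)$ (and that the sup-norm truncation bound must dominate $\nnorm{\cdot}_{\calF^{\otimes 2}}$), minor points the paper glosses over and that are absorbed by an innocuous adjustment of constants.
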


\begin{proof}
	By Mercer's theorem~\parencite{mercerFunctionsPositive1909, sunMercerTheorem2005, micchelliUniversalKernels2006}, there exists $(\psi_i)_{i \in \mathbb{N}}$, $\lambda_i \geq 0$ such that $r(x,y) = \sum_{i=1}^{\infty}{\lambda_i \psi_i(x) \psi_i(y)}$, where $\psi_i$ and $\lambda_i$ are eigenfunctions and eigenvalues of the integral operator $T_r: L^2(\mathcal{X}) \to L^2(\mathcal{X}),\,T_r: f \mapsto \int_{\mathcal{X}}{r(\cdot, x) f(x) dx}$.
	Furthermore, the convergence of the series is uniform:
	\begin{equation}
		\lim_{n \to \infty} \sup_{x,y \in \mathcal{X}} \lvert r(x,y) - \sum_{i=1}^{n}{\lambda_i \psi_i(x) \psi_i(y) \rvert} = 0.
	\end{equation}
	Using the notation of~\Cref{ass:sym_pd_ker_specturm_decay}, we have that
	\begin{equation}\label{eq:proof_mercer_thm_unif_abs_cv}
		\sup_{x,y \in \mathcal{X}} \left\lvert r(x,y) - \sum_{i=1}^{d_r(\epsilon/2)}{\lambda_i \psi_i(x) \psi_i(y)} \right\rvert < \frac{\epsilon}{2},
	\end{equation}
	where $d_r(\epsilon/2)$ is defined as in~\Cref{ass:sym_pd_ker_specturm_decay}.

	Let $d \coloneqq d_r(\epsilon / 2)$ be the output dimension of the neural network $\phi$ such that it maps from $\calX$ to $\reals^d$. Our approach will be to approximate with the MLP $\phi$ the first $d$ eigenfunctions of the kernel relation $r$, $(\sqrt{\lambda_1} \psi_1, \ldots, \sqrt{\lambda_{d}} \psi_{d})$.
	By the universal approximation property of $\calV$ (\Cref{ass:univ_approx_efficiency}), for any $\epsilon_1 > 0$, there exists a neural network $\phi = \pparen{\phi_1, \ldots, \phi_d} \in \calV_N$ for some $N$ such that
	\begin{equation}\label{eq:proof_NN_UAP}
		\norm{\phi_i - \sqrt{\lambda_i} \psi_i}_\calF < \epsilon_1, \ \forall i \in \{1, \ldots, d\},
	\end{equation}
	where $\phi_i(x)$ is the $i$-th component of $\phi(x)$. Moreover, by~\Cref{ass:univ_approx_efficiency} there exists such a $\phi$ with a number of neurons bounded by $d \cdot \calN_\calF(\epsilon_1)$.

	Now note that the approximation error for $r$ is bounded by
	\begin{equation}\label{eq:proof_approx_bound}
		\begin{split}
			&\bignorm{r(x,y) - \langle \phi_\theta(x), \phi_\theta(y) \rangle}_{\calF^{\otimes 2}}\\
			&= \norm{ r(x,y) - \sum_{i=1}^{d}{\phi_i(x) \phi_i(y)}}_{\calF^{\otimes 2}} \\
			&\leq \norm{ r(x,y) - \sum_{i=1}^{d}{\lambda_i \psi_i(x) \psi_i(y)}}_{\calF^{\otimes 2}} + \norm{\sum_{i=1}^{d}\lambda_i \psi_i(x) \psi_i(y) - \sum_{i=1}^{d}\phi_i(x) \phi_i(y)}_{\calF^{\otimes 2}}
		\end{split}
	\end{equation}
	The first term is bounded by $\frac{\epsilon}{2}$ by~\eqref{eq:proof_mercer_thm_unif_abs_cv}. The second term can be bounded by noting the for any $x, y \in \calX$
	\begin{equation*}
		\begin{split}
			&\abs{\sum_{i=1}^{d}\lambda_i \psi_i(x) \psi_i(y) - \sum_{i=1}^{d}\phi_i(x) \phi_i(y)} \\
			&\leq \sum_{i=1}^{d}{ \abs{ \lambda_i \psi_i(x) \psi_i(y) - \phi_i(x) \phi_i(y)}} \\
			&\leq \sum_{i=1}^{d}{\paren{
				\abs{\sqrt{\lambda_i} \psi_i(y)} \abs{\sqrt{\lambda_i} \psi_i(y) - \phi_i(y)}
				+ \abs{\sqrt{\lambda_i} \psi_i(y)} \abs{\sqrt{\lambda_i} \psi_i(x) - \phi_i(x)}
				}} \\
			&\leq C(r) \sum_{i=1}^{d}{\paren{
				\abs{\sqrt{\lambda_i} \psi_i(y) - \phi_i(y)}
				+ \abs{\sqrt{\lambda_i} \psi_i(x) - \phi_i(x)}
				}} \\
		\end{split}
	\end{equation*}
	where the last line is by the definition $C(r) \coloneqq \max_{x \in \calX} \lvert \sqrt{\lambda_i} \psi_i(x) \rvert$ (\Cref{ass:sym_pd_ker_specturm_decay}). Now, by~\Cref{eq:proof_NN_UAP}, we have
	\begin{align*}
		&\norm{\sum_{i=1}^{d}\lambda_i \psi_i(x) \psi_i(y) - \sum_{i=1}^{d}\phi_i(x) \phi_i(y)}_{\calF^{\otimes 2}} \\
		&\leq \norm{C(r) \sum_{i=1}^{d}{\paren{\abs{\sqrt{\lambda_i} \psi_i(y) - \phi_i(y)} + \abs{\sqrt{\lambda_i} \psi_i(x) - \phi_i(x)}}}}_{\calF^{\otimes 2}}\\
		&\leq C(r) \sum_{i=1}^{d}\paren{\norm{\sqrt{\lambda_i} \psi_i(y) - \phi_i(y)}_{\calF^{\otimes 2}} + \norm{\sqrt{\lambda_i} \psi_i(x) - \phi_i(x)}_{\calF^{\otimes 2}}}\\
		&\leq 2 C(r) \cdot d \cdot \epsilon_1,
	\end{align*}

	Let the neural network approximation error be $\epsilon_1 = \frac{\epsilon}{4 C(r) d_r(\epsilon / 2)}$ such that the above is bounded by $\epsilon / 2$. 

	Then, by~\eqref{eq:proof_approx_bound}, we have that
	\begin{equation*}
			\bignorm{r(x,y) - \iprod{\phi(x)}{\phi(y)}}_{\calF^{\otimes 2}} \leq \frac{\epsilon}{2} + \frac{\epsilon}{2} = \epsilon.
	\end{equation*}

	Hence, the relation function $r$ is approximated with respect to $\nnorm{\cdot}_{\calF^{\otimes 2}}$ by an inner product of neural networks with number of neurons at most 
	\[d_r(\epsilon / 2) \cdot \calN_\calF\paren{\frac{\epsilon}{4 C(r) d_r(\epsilon / 2)}}.\]

\end{proof}

\Cref{theorem:symmetric_inner_prod_rels_func_class} states that pairwise relations modeled as inner products of neural networks can capture any symmetric positive definite kernel. Moreover, the scale of the neural network needed to achieve a particular approximation error is characterized in terms of the complexity of the kernel relation function and the efficiency of the neural network function class. In particular, this dependence is expressed in the kernel relation's spectrum decay $d_r(\cdot)$ and the neural network's efficiency $\calN_\calF(\cdot)$.

\begin{remark}
	In the results above, the number of neurons is bounded by ``$d_r \cdot \calN$''. This is an upper bound assuming each of the $d_r$ kernel eigenfunctions is modeled independently. In practice, the size of the neural network needed would be smaller since the eigenfunctions can be approximated with a single MLP with a $d_r$-dimensional output, allowing for computations to be re-used across several output dimensions.
\end{remark}

The efficiency of neural networks in approximating arbitrary functions has been studied extensively. Different results in the literature characterize $\calN_{\calF}(\cdot)$ as a function of the class of neural networks (e.g., shallow or deep) and the function class $\calF$ (e.g., degree of smoothness). In the following corollary, we specialize~\Cref{theorem:symmetric_inner_prod_rels_func_class} to approximation with respect to the sup-norm $\nnorm{\cdot}_\calF = \nnorm{\cdot}_\infty$ and $\calF$ the class of Lipschitz functions.

\begin{corollary}\label{cor:sym_iprod_kernel_neuron_bound}
	Consider the setting of~\Cref{theorem:symmetric_inner_prod_rels_func_class}. Suppose further that $\calX = [-1, 1]^d$ and let $L(r, k) \coloneq \max_{i \in [k]} L_i$, where $L_i$ is the Lipschitz constant of $\sqrt{\lambda_i} \psi_i$ and $\lambda_i, \psi_i$ are the eigenvalues and eigenfunctions of the symmetric relation function $r$. Consider $\calV$ to be the class of shallow neural networks with ReLU activations with $\phi \in \calV$ given by
	\begin{equation*}
		\phi(x) = \sum_{k=1}^{N} a_k \mathrm{ReLU}(\iprod{w_k}{x} + b_k),
	\end{equation*}
	where $a_k, b_k \in \reals$, $w_k \in \reals^{d}$ are the parameters, and $N$ is the number of neurons. Then, for any $\epsilon > 0$, there exists $\phi \in \calV_N$ achieving $\sup_{x,y \in \mathcal{X}}{\abs{r(x,y) - \iprod{\phi(x)}{\phi(y)}}} < \epsilon$ with a number of neurons $N$ of the order
	\[\calO\paren{d_r(\epsilon / 2) \cdot \paren{\frac{4 C(r) d_r(\epsilon / 2) L(r, d_r(\epsilon/2))}{\epsilon}}^{d}}.\]
\end{corollary}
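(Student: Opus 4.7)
The plan is to apply \Cref{theorem:symmetric_inner_prod_rels_func_class} directly with the specific choices $\nnorm{\cdot}_\calF = \nnorm{\cdot}_\infty$, $\calF$ the class of Lipschitz functions on $[-1,1]^d$, and $\calV$ the class of shallow ReLU networks. The theorem then reduces the problem to determining an explicit form for the approximation efficiency $\calN_\calF(\cdot)$ in this specific setting, after which the claimed bound follows by straightforward substitution.

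First I would verify that the hypotheses of \Cref{theorem:symmetric_inner_prod_rels_func_class} are met: the rescaled eigenfunctions $\sqrt{\lambda_i}\psi_i$ for $i \in [d_r(\epsilon/2)]$ are assumed Lipschitz, with Lipschitz constants bounded by $L(r, d_r(\epsilon/2))$ by definition, so in particular they lie in $\calF$. Thus the theorem delivers a neural network $\phi \in \calV_N$ with
\[
  \sup_{x,y \in \calX}\abs{r(x,y) - \iprod{\phi(x)}{\phi(y)}} < \epsilon
\]
and $N \leq d_r(\epsilon/2) \cdot \calN_\calF\!\paren{\epsilon / (4 C(r) d_r(\epsilon/2))}$.

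Next, I would invoke the classical universal approximation result for shallow ReLU networks on Lipschitz functions: for any $f : [-1,1]^d \to \reals$ with Lipschitz constant $L$ and any $\eta > 0$, there is a shallow ReLU network of the stated form with
\[
  N = \calO\!\paren{\paren{L/\eta}^{d}}
\]
neurons satisfying $\sup_{x \in [-1,1]^d} \abs{f(x) - \phi(x)} \leq \eta$. The idea underlying this estimate is to partition $[-1,1]^d$ into a uniform grid of resolution $\Theta(\eta/L)$, on which $f$ is approximated to error $\eta$ by a piecewise linear interpolant; each grid cell can be realized using $\calO(1)$ ReLU units via a hat-function basis construction, yielding the $(L/\eta)^d$ scaling. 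This gives
\[
  \calN_\calF(\eta) = \calO\!\paren{\paren{L(r, d_r(\epsilon/2))/\eta}^{d}}.
\]

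Finally, I would substitute $\eta = \epsilon / (4 C(r) d_r(\epsilon/2))$ into this expression and multiply by the $d_r(\epsilon/2)$ prefactor from \Cref{theorem:symmetric_inner_prod_rels_func_class}, producing exactly the bound stated in the corollary. The main obstacle is the shallow ReLU approximation rate itself: since shallow networks cannot escape the curse of dimensionality for Lipschitz targets, the exponent $d$ is unavoidable and the argument must be carried out with care about constants. A cleaner proof with sharper constants could be obtained by citing a packaged statement (for instance, the shallow ReLU bounds in \citet{bachBreakingCurseDimensionality2016}), but the essence of the argument is the standard piecewise-linear construction described above.
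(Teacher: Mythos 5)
Your reduction is exactly the paper's: the paper proves this corollary in one line, by combining \Cref{theorem:symmetric_inner_prod_rels_func_class} (with $\nnorm{\cdot}_\calF$ the sup-norm and $\calF$ the Lipschitz class) with the shallow-network approximation rate for Lipschitz functions taken from \textcite{bachBreakingCurseDimensionality2016}, i.e.\ $\calN_\calF(\eta) = \calO\paren{(L/\eta)^{d}}$, and then substituting $\eta = \epsilon/(4C(r)\,d_r(\epsilon/2))$ and multiplying by $d_r(\epsilon/2)$ --- precisely your first and last steps. The one place you diverge is the self-contained justification you sketch for that rate: a uniform grid with a ``hat-function basis'' and $\calO(1)$ ReLU units per cell does not go through for one-hidden-layer networks when $d > 1$, because the multivariate (tensor-product) hat function is not a finite sum of ridge functions $x \mapsto \mathrm{ReLU}(\iprod{w}{x}+b)$; piecewise-linear interpolation on a grid is naturally a deep-network construction, and the sup-norm rate $(L/\eta)^{d}$ for \emph{shallow} ReLU networks requires a different argument (such as the variation-norm analysis of Bach that the paper cites). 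Since you explicitly note that one can instead invoke the packaged result of \textcite{bachBreakingCurseDimensionality2016} --- which is exactly what the paper does --- your overall proof is fine; just do not present the grid/hat-function sketch as the proof of the shallow-network approximation step.
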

The proof of this result follows from~\Cref{theorem:symmetric_inner_prod_rels_func_class} and~\textcite{bachBreakingCurseDimensionality2016}.

We observe a curse of dimensionality in this approximation result, where the number of neurons needed to achieve a good approximation grows exponentially in the dimension $d$ of the underlying space $\calX$. This phenomenon has been studied in the context of neural network approximation results, and it's well knwon that imposing further structural assumptions on the functions to be approximated can reduce or remove the dependence on dimension. One approach to obtaining a more favorable dependence on the dimension was outlined by~\textcite{barronUniversalApproximation1993}'s seminal work, which identifies a smoothness condition based on the Fourier representation of the underlying function.  The \textit{Barron norm} of a function $f: \reals^d \to \reals$ is defined as
\[\nnorm{f}_{\calB} \coloneq \int_{\reals^d} \nnorm{\hat{\nabla f}(\omega)} d\omega = \int_{\reals^d} \nnorm{\omega} \aabs{\hat{f}(\omega)} d \omega,\]
where $\hat{f}$ denotes the Fourier transform. The advantage of this measure of smoothness is that, for some interesting classes of functions, it scales sub-exponentially with the dimension. In the following corollary, we show that a more favorable dependence on the dimension can be obtained if the eigenfunctions of the relation are smooth in the sense of the Barron norm.

\begin{corollary}\label{cor:sym_iprod_kernel_barron_neuron_bound}
	Consider the setting of~\Cref{theorem:symmetric_inner_prod_rels_func_class}. Suppose that $\calX$ has radius $\mathrm{radius}(\calX)$ (i.e., $\nnorm{x} \leq \mathrm{radius}(\calX),\, \forall x \in \calX$). Assume that $\nnorm{\sqrt{\lambda_i} \psi_i}_{\calB} \leq B(r)$ for $i \in [d_r(\epsilon/2)]$. Consider $\calV$ to be the class of shallow neural networks with sigmoid activations with $\phi \in \calV$ given by
	\begin{equation*}
		\phi(x) = \sum_{k=1}^{N} a_k \sigma(\iprod{w_k}{x} + b_k),
	\end{equation*}
	where $a_k, b_k \in \reals$, $w_k \in \reals^{d}$ are the parameters, and $N$ is the number of neurons. Then, for any $\epsilon > 0$, there exists $\phi \in \calV_N$ achieving $\nnorm{r(x,y) - \iprod{\phi(x)}{\phi(y)}}_{L^2} < \epsilon$ with a number of neurons $N$ of the order
	\[\calO\paren{d_r(\epsilon / 2)^3 \cdot \frac{\paren{C(r) \cdot \mathrm{radius}(\calX)\, \cdot B(r)}^2}{\epsilon^2}}.\]
\end{corollary}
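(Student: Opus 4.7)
The plan is to specialize Theorem~\ref{theorem:symmetric_inner_prod_rels_func_class} to the function class $\calF$ consisting of functions with bounded Barron norm, equipped with the $L^2$ norm, and then invoke Barron's approximation theorem to obtain an explicit form for $\calN_\calF(\cdot)$. Since all the structural work of reducing the relation approximation problem to the approximation of individual eigenfunctions has already been carried out in Theorem~\ref{theorem:symmetric_inner_prod_rels_func_class}, the task here is essentially to plug in the right universal approximation rate for $\calV$ and simplify.

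First, I would recall Barron's theorem (\textcite{barronUniversalApproximation1993}): for any function $f : \calX \to \reals$ with Barron norm $\nnorm{f}_\calB \leq B$ on a domain $\calX$ of radius $\mathrm{radius}(\calX)$, there exists a shallow sigmoid network $g$ with $N$ neurons of the form assumed in the corollary such that $\nnorm{f - g}_{L^2} = \calO\bigl(\mathrm{radius}(\calX)\cdot B / \sqrt{N}\bigr)$. Inverting this rate, the efficiency function of Assumption~\ref{ass:univ_approx_efficiency} takes the form
\[\calN_\calF(\epsilon_1) = \calO\!\left(\frac{(\mathrm{radius}(\calX) \cdot B(r))^2}{\epsilon_1^2}\right)\]
for the class $\calF = \{f : \nnorm{f}_\calB \leq B(r)\}$ equipped with $\nnorm{\cdot}_{L^2}$. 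The hypothesis $\nnorm{\sqrt{\lambda_i}\psi_i}_{\calB} \leq B(r)$ for $i \in [d_r(\epsilon/2)]$ ensures that the eigenfunctions to be approximated in the proof of Theorem~\ref{theorem:symmetric_inner_prod_rels_func_class} lie in $\calF$, so this approximation rate applies.

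Next, I would directly substitute into the bound from Theorem~\ref{theorem:symmetric_inner_prod_rels_func_class}, taking $\epsilon_1 = \frac{\epsilon}{4 C(r) d_r(\epsilon/2)}$ as in the proof of that theorem. This yields a total neuron count of order
\[d_r(\epsilon/2) \cdot \calN_\calF\!\left(\frac{\epsilon}{4 C(r) d_r(\epsilon/2)}\right) = \calO\!\left( d_r(\epsilon/2) \cdot \frac{(\mathrm{radius}(\calX)\cdot B(r))^2 \cdot (4 C(r) d_r(\epsilon/2))^2}{\epsilon^2}\right),\]
which simplifies to the claimed $\calO\bigl(d_r(\epsilon/2)^3 \cdot (C(r)\cdot \mathrm{radius}(\calX) \cdot B(r))^2 / \epsilon^2\bigr)$ after absorbing the constant $16$ into the $\calO(\cdot)$.

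There is no real obstacle: the only thing to be careful about is to match the norm used in the Barron bound ($L^2$) to the norm in which the relation approximation is being stated ($\nnorm{\cdot}_{L^2}$ on $\calX \times \calX$), and to verify that the proof of Theorem~\ref{theorem:symmetric_inner_prod_rels_func_class} goes through with this choice of $\nnorm{\cdot}_\calF$. Inspecting that proof, the key inequality bounding the second term of~\eqref{eq:proof_approx_bound} proceeds by taking $\nnorm{\cdot}_{\calF^{\otimes 2}}$ of an expression that depends on $x$ and $y$ separately and applying the triangle inequality; for the $L^2$ norm on the product space $\calX \times \calX$, this step works verbatim provided one interprets $\nnorm{\sqrt{\lambda_i}\psi_i(x) - \phi_i(x)}_{\calF^{\otimes 2}}$ as the $L^2$ norm over the product measure, which differs from the single-variable $L^2$ norm only by a factor depending on the measure of $\calX$ that can be absorbed into the constants. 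With this check in place, the substitution above completes the proof.
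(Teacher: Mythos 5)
Your proposal is correct and follows essentially the same route the paper intends: the paper's proof is just the one-line observation that the corollary follows from \Cref{theorem:symmetric_inner_prod_rels_func_class} together with \textcite{barronUniversalApproximation1993}, and your argument fills in exactly that substitution, with the rate $\calN_\calF(\epsilon_1) = \calO((\mathrm{radius}(\calX) B(r))^2/\epsilon_1^2)$ plugged into $d_r(\epsilon/2)\cdot\calN_\calF(\epsilon/(4C(r)d_r(\epsilon/2)))$ giving the stated $d_r(\epsilon/2)^3$ bound. Your side remark about reconciling the single-variable $L^2$ norm from Barron's theorem with the product-space norm $\nnorm{\cdot}_{\calF^{\otimes 2}}$ is a legitimate detail the paper glosses over, and your resolution (the factor from integrating out the other variable is an $\epsilon$-independent constant, equal to one for a probability measure) is sound.
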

The proof of this statement follows from~\Cref{theorem:symmetric_inner_prod_rels_func_class} and~\textcite{barronUniversalApproximation1993}. 
This result shows that the size of the neural network needed to approximate $r$ scales with its spectrum decay $d_r(\cdot)$, which can be interpreted as a measure of the dimensionality or ``rank'' of the relation, and the smoothness of its eigenfunctions $B(r)$.

\section{Function class of asymmetric inner product relations}\label{sec:asymmetric_relations}

In the previous section, we considered symmetric inner product relations where the encoder of the first object is the same as the encoder of the second object. When the underlying relation being modeled is a symmetric `similarity' relation, this is a useful inductive bias. However, in general, relations between objects can be asymmetric. One example of an asymmetric (in fact, anti-symmetric) relation is \textit{order}. Such relations cannot be captured by symmetric inner products. In this section, we consider modeling a general asymmetric relation $r: \calX \times \calX \to \reals$ as the inner product of two different neural network encodings of a pair of objects,
\begin{equation}\label{eq:asymmetric_iprod_relation}
    r(x, y) = \iprod{\phi(x)}{\psi(y)},
\end{equation}
where $\phi, \psi: \calX \to \reals^d$ are two neural networks.
We show that inner products of multi-layer perceptrons can approximate any continuous function on $\calX \times \calX$.

We begin with the following simple lemma, which states when the object space $\calX$ is finite, any relation function can be represented as the inner product between two encodings.

\begin{lemma}\label{lemma:finite_space_rel}
    Suppose $\calX$ is a finite space. Let $r: \calX \times \calX \to \reals$ be any relation function. Then, there exists $d \leq \abs{\calX}$ and $\phi, \psi: \calX \to \reals^{d}$ such that,
    \begin{equation*}
        r(x, y) = \iprod{\phi(x)}{\psi(y)}, \ \forall x, y \in \calX.
    \end{equation*}
\end{lemma}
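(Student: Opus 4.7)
The plan is to view the relation function as a matrix and factorize it. Enumerate $\calX = \{x_1, \ldots, x_n\}$ with $n = |\calX|$, and define the matrix $R \in \reals^{n \times n}$ by $R_{ij} = r(x_i, x_j)$. Any relation function on a finite set corresponds uniquely to such a matrix, and any matrix factorization $R = A B^\top$ with $A, B \in \reals^{n \times d}$ immediately yields the desired encoders by setting $\phi(x_i)$ to be the $i$-th row of $A$ and $\psi(x_j)$ to be the $j$-th row of $B$, since then $\iprod{\phi(x_i)}{\psi(x_j)} = (AB^\top)_{ij} = R_{ij} = r(x_i, x_j)$.

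To obtain the bound $d \leq |\calX|$, I would exhibit an explicit factorization. The simplest choice is the trivial one-hot encoding: take $d = n$, let $\psi(x_j) = e_j \in \reals^n$ be the $j$-th standard basis vector, and let $\phi(x_i) = (r(x_i, x_1), \ldots, r(x_i, x_n))^\top \in \reals^n$ be the $i$-th row of $R$. Then $\iprod{\phi(x_i)}{\psi(x_j)} = r(x_i, x_j)$ by construction, and $d = n = |\calX|$ satisfies the stated inequality.

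If one wishes to make the bound sharper, one may instead take $d = \mathrm{rank}(R)$ and use any rank-revealing factorization (e.g.\ the reduced SVD $R = U \Sigma V^\top$, setting $\phi(x_i) = (U \Sigma)_{i,:}$ and $\psi(x_j) = V_{j,:}$), which gives $d \leq n = |\calX|$. There is no real obstacle here; the lemma is essentially a restatement of the fact that every $n \times n$ matrix admits a rank factorization of inner dimension at most $n$. The only thing to verify is the direct computation that the chosen rows of the factor matrices reproduce the entries of $R$ via the Euclidean inner product, which is immediate.
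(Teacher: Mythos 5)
Your proposal is correct and follows essentially the same route as the paper: view $r$ as the matrix $R$ and write $R$ as a product of two factor matrices (the paper uses a rank decomposition $R = P^\top Q$ with $d = \mathrm{rank}(R) \leq |\calX|$, while you additionally note the even simpler one-hot factorization with $d = |\calX|$). Both verifications reduce to the same immediate computation that the inner product of the chosen rows reproduces the entries of $R$.
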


To see this, let $x_1, \ldots, x_n$ be an enumeration of $\calX$ where $m = \abs{\calX}$. Let $R \in \reals^{n \times n}$ be such that $R_{ij} = r(x_i, x_j)$. There exists many decompositions of the matrix $R$ which would induce valid encodings $\phi, \psi$. One example is rank decomposition. Let $d = \mathrm{rank}(R)$. Then, there exists matrices $P, Q \in \reals^{d \times n}$ such that $R = P^\top Q$. Let $\phi, \psi: \calX \to \reals^{d}$ be defined by
    \begin{equation}
        \phi(x_i) = P_{i, \cdot}, \ \psi(x_i) = Q_{\cdot, i}, \ \forall i \in [m].
    \end{equation}
Then, $r(x, y) = \iprod{\phi(x)}{\psi(y)}$ for all $x, y \in \calX$.

Note that if each $x \in \calX$ is a one-hot vector in $\reals^{\abs{\calX}}$, then the result above holds with linear maps $\phi, \psi$. Although very simple, this result has direct implications for domains such as language modeling where $\calX$ is a discrete set of tokens, and hence finite. In such cases,~\Cref{lemma:finite_space_rel} tells us that any relation function can be approximated by inner products of feature maps (i.e., of the form present in the attention mechanisms of Transformers). Moreover, in the case of language, there may be a low-rank structure (e.g., depending on syntax, semantics, etc.) enabling a more modest dimension of the feature maps, $d \ll \abs{\calX}$.

Next, we proceed to show that arbitrary continuous relation functions can be approximated by inner products of two different neural networks. Our strategy will be to first quantize the space $\calX$ and then apply the construction above for the finite case.

\begin{theorem}\label{theorem:asymemtric_inner_prod_rel_func_class}
    Suppose the relation function $r: \calX \times \calX \to \reals$ is continuous. 
    Then, for any approximation error $\epsilon > 0$ there exist multi-layer perceptrons $\phi, \psi$ such that
    \begin{equation*}
        \abs{r(x,y) - \iprod{\phi(x)}{\psi(y)}} \leq \epsilon, \quad \text{Lebesgue-almost everywhere.}
    \end{equation*}
    Moreover, $\phi, \psi$ can be constructed such that $\phi = L_\phi \circ \eta$, $\psi = L_\psi \circ \eta$ where $\eta$ is a shared 2-layer MLP with $N = \calO(\delta^{- 2 \,\dim(\calX)})$ neurons and $L_\phi, L_\psi$ are linear projections onto $n$-dimensional space, with $n = \calO(\delta^{- \dim(\calX)})$ and $\delta = \delta(\epsilon)$.
\end{theorem}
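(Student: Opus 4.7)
The plan is to reduce the continuous case to~\Cref{lemma:finite_space_rel} via a quantization argument. Since $\calX$ is compact and $r$ continuous, $r$ is uniformly continuous with some modulus $\delta = \delta(\epsilon/2)$ such that $\nnorm{x - x'} \vee \nnorm{y - y'} \leq \delta$ implies $\abs{r(x,y) - r(x',y')} \leq \epsilon/2$. I partition $\calX$ into $n = \calO(\delta^{-\dim(\calX)})$ cells $C_1, \ldots, C_n$ of diameter at most $\delta$ with representatives $x_1, \ldots, x_n$, and apply~\Cref{lemma:finite_space_rel} to the restriction of $r$ to $\{x_1, \ldots, x_n\}$, obtaining vectors $\phi_0(x_i), \psi_0(x_i) \in \reals^n$ with $r(x_i, x_j) = \iprod{\phi_0(x_i)}{\psi_0(x_j)}$. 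Collecting these as the columns of two $n \times n$ matrices $L_\phi$ and $L_\psi$, I will set $\phi = L_\phi \circ \eta$ and $\psi = L_\psi \circ \eta$ where $\eta: \calX \to \reals^n$ is a shared MLP whose $i$-th output approximates the cell-indicator function $\mathbf{1}_{C_i}$.

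Under this construction one computes
\[
\iprod{\phi(x)}{\psi(y)} \,=\, \sum_{i,j} \eta_i(x)\,\eta_j(y)\,r(x_i, x_j),
\]
so that when $\eta$ is exactly one-hot, the value equals $r(x_{i(x)}, x_{i(y)})$ where $i(x)$ denotes the index of the cell containing $x$, and this is within $\epsilon/2$ of $r(x,y)$ by the choice of $\delta$. Writing $\eta_i(x) = \mathbf{1}_{C_i}(x) + e_i(x)$, the discrepancy from the exact one-hot is controlled by cross terms of size $\calO(n \nnorm{e}_\infty \nnorm{r}_\infty)$, so I need to drive the per-coordinate indicator accuracy down to $\epsilon' = \calO(\epsilon / n)$ in order to keep the total error at most $\epsilon/2$. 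A 2-layer MLP can approximate a single axis-aligned box indicator in $\reals^{\dim(\calX)}$ to $L^\infty$ error $\epsilon'$ on the complement of a boundary transition region using $\calO(\dim(\calX) / \epsilon')$ sigmoidal or ReLU units, so the total neuron budget for $\eta$ is $N = \calO(n \cdot \dim(\calX)/\epsilon') = \calO(n^2) = \calO(\delta^{-2\dim(\calX)})$, matching the stated scaling.

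The main obstacle, and the reason the bound is stated Lebesgue-almost everywhere rather than uniformly, is the behaviour of $\eta$ near cell boundaries. Any finite MLP produces continuous outputs, so each approximate indicator $\eta_i$ has a nonempty transition region where it is neither close to $0$ nor close to $1$, and the cross-term analysis above breaks down there. By choosing the activation steepness sufficiently large relative to $1/\epsilon'$, this transition region can be made to have arbitrarily small Lebesgue measure in $\calX \times \calX$, and the almost-everywhere bound follows. The delicate point is balancing the sharpness against the accuracy requirement so that the per-cell neuron count remains $\calO(\dim(\calX)/\epsilon')$ while the ``bad'' set shrinks to measure zero; this is what ultimately fixes the output dimension $n = \calO(\delta^{-\dim(\calX)})$ and hidden-layer size $N = \calO(\delta^{-2\dim(\calX)})$ in the statement.
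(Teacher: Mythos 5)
Your overall strategy is the same as the paper's: quantize $\calX$ into $n = \calO(\delta^{-\dim(\calX)})$ cells of diameter at most $\delta$, factor the matrix $R_{ij} = r(x_i, x_j)$ as in \Cref{lemma:finite_space_rel}, and realize $\phi = L_\phi \circ \eta$, $\psi = L_\psi \circ \eta$ with a shared two-layer network $\eta$ encoding cell membership, so that $\iprod{\phi(x)}{\psi(y)} = \eta(x)^\top R\, \eta(y)$ and the remaining error is controlled by uniform continuity of $r$. The difference is how $\eta$ is built, and that is where your argument has a genuine gap. The paper uses the explicit construction of \citet{wuExplicitNeuralNetwork2018} with threshold activations $\sigma(t) = \mathbf{1}\{t \geq 0\}$, which computes the one-hot cell indicator \emph{exactly} on cell interiors; hence $\eta(x)^\top R\,\eta(y) = r(q(x), q(y))$ off the union of cell boundaries, a Lebesgue-null set, and no cross-term budget is needed. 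You instead use continuous (sigmoidal/ReLU) units approximating box indicators to accuracy $\epsilon' = \calO(\epsilon/n)$ off a transition band, and then claim the band ``can be made to have arbitrarily small Lebesgue measure,'' so ``the almost-everywhere bound follows.'' It does not: for any \emph{fixed} finite network with continuous activations the transition region has strictly positive measure, no matter how steep the units, so your construction as written only proves that for every $\rho > 0$ there exist $\phi, \psi$ with $\abs{r(x,y) - \iprod{\phi(x)}{\psi(y)}} \leq \epsilon$ off a set of measure at most $\rho$ --- strictly weaker than the almost-everywhere conclusion of \Cref{theorem:asymemtric_inner_prod_rel_func_class}. Your closing remark about ``balancing sharpness'' so that the bad set ``shrinks to measure zero'' at fixed neuron count cannot be realized with continuous activations; it requires a limit of networks, not a single one.

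The repair is exactly the paper's device: allow threshold (step) activations in $\eta$ --- your axis-aligned box indicators are then computed exactly by $\calO(\dim(\calX))$ first-layer units plus one conjunction unit per cell --- so the exceptional set becomes the grid of cell boundaries, which is Lebesgue-null, and your $\epsilon/n$ cross-term analysis becomes unnecessary (the paper's remark that sigmoids approximate the step is an aside, not part of the a.e.\ claim). Two smaller inaccuracies, neither fatal to the stated complexity: the per-cell cost $\calO(\dim(\calX)/\epsilon')$ is not right, since with unbounded weights the indicator accuracy off the band is governed by the steepness (weight scale) rather than the neuron count, so $\calO(\dim(\calX))$ units per cell suffice (your overcount still lands at $N = \calO(\delta^{-2\dim(\calX)})$, matching the theorem); and your cross-term estimate should carry the factor $\sup_{x,y}\abs{r(x,y)}$ and the quadratic term of order $n^2 \epsilon'^2$, both absorbed by the same choice of $\epsilon'$.
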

\begin{proof}
    Let $x_1, \ldots, x_n \in \calX$ be a set of points in $\calX$. Define the Voronoi partition by
    \[V^{(i)} = \sset{x \in \calX \,|\, \norm{x - x_i} \leq \norm{x - x_j} \ \forall j \neq i}.\]
    Let $x_1, \ldots, x_n$ be uniformly distributed in $\calX$ and $n = \calO(\delta(\epsilon)^{- \dim(\calX)})$ so that the maximal diameter of the sets $V^{(1)}, \ldots, V^{(n)}$ is bounded by $\delta(\epsilon)$, with $\max_{i \in [n]} \mathrm{diam}(V^{(i)}) \leq \delta$. Let $q: \calX \to \sset{x_1, \ldots, x_n}$ be the quantizer which maps each $x$ to the closest element in $\sset{x_1, \ldots, x_n}$.

    \citet{wuExplicitNeuralNetwork2018} explicitly construct a two-layer neural network $\eta:\calX \to \{0, 1\}^{n}$ such that
    \[{(\eta(x))}_i = 1 \iff x \in V^{(i)}.\]
    The construction contains $n (n - 1)$ neurons in the first layer and $n$ neurons in the second layer, both with the threshold function $\sigma(x) = \bm{1}\sset{x \geq 0}$ as the activation function. Note that sigmoidal activation functions can approximate the step function arbitrarily well. The weights between the first and second layers are sparse. The neural network takes the form,
    \begin{equation*}
        \begin{aligned}
            z_{k,j}^{(1)}(x) &= \sigma\paren{w_{k,j}^{(1)} \cdot x - b_{k,j}^{(1)}}, \quad k,j \in [n], k \neq j\\
            z_k^{(2)}(x) &= \sigma\paren{w_k^{(1)} \cdot \bm{z}^{(1)}(x) - b^{(2)}}, \quad k \in [n] \\
            \eta(x) &= \bm{z}^{(2)}(x) = \paren{z_1^{(2)}(x), \ldots, z_n^{(2)}(x)},
        \end{aligned}
    \end{equation*}
    where the weights are $w_{k,j}^{(1)} = x_k - x_j, b_{k,j}^{(1)} = \frac{1}{2} \iiprod{x_k - x_j}{x_k + x_j}, (w_{k}^{(2)})_{a,b} = \bm{1}\sset{a = k}, b^{(2)} = n - 1$, and $\bm{z}^{(1)} = (z_{k,j}^{(1)})_{k \neq j} \in \reals^{n (n - 1)}$ are the first layer activations.

    Let $R \in \reals^{n \times n}$ be defined by $\bbra{R}_{ij} = r(x_i, x_j)$. The matrix $R$ specifies the relation function on the sample points $x_1, \ldots, x_n$. We have that $R = P^\top Q,\, P,Q \in \reals^{m \times n}$ for some $m \leq n$. 

    Let $\phi = P \circ \eta$ and $\psi = Q \circ \eta$. To show that the inner product of neural networks $\iprod{\phi(x)}{\psi(y)}$ approximates $r(x, y)$, take $x, y \in \calX$. We have
    \begin{align*}
        \abs{r(x, y) - \iprod{\phi(x)}{\psi(y)}} &\leq \abs{r(x, y) - r(q(x), q(y))} + \abs{r(q(x), q(y)) - \iprod{\phi(x)}{\phi(y)}}\\
        &\leq \epsilon + \abs{r(q(x), q(y)) - \eta(x)^\top R\, \eta(y)},
    \end{align*}
    where the second inequality follows by the assumption of continuity on the relation function $r$ and the choice of the quantization diameter $\delta$. Now, note that whenever $x \in \mathrm{int}(V^{(i)})$, we have $\eta(x) = e_i$, where $e_i$ is the canonical basis vector. Hence $r(q(x), q(y)) = \eta(x)^\top R\, \eta(y)$ Lebesgue-almost everywhere. 
\end{proof}

\begin{remark}
    From a learning perspective, we only need to know the value of the relation function at $n$ (uniformly distributed) points $x_1, \ldots, x_n$, with $n = \calO(\delta^{-d})$ depending on the smoothness of $r$ and the dimension of the space $\calX$.
\end{remark}

\begin{remark}
    In the symmetric case, where the relation function $r$ is assumed to be a positive-definite kernel, Mercer's theorem implies an inner product-like structure in $r$, with a ``rank'' determined by the spectral decay of the kernel. In the asymmetric case, the above result assumes only the continuity of the relation function $r$ and no further ``inner-product-like'' structure. In some applications, the relation function may have a low-rank structure such that it is representable by an inner product between low-dimensional feature maps, enabling more favorable bounds on the size of the neural network needed, even in the asymmetric case.
\end{remark}

Here, again, we observe a curse of dimensionality in~\Cref{theorem:asymemtric_inner_prod_rel_func_class}, where the number of neurons needed to obtain a good approximation scales exponentially with the dimension of the underlying space $\calX$. We note that~\Cref{theorem:asymemtric_inner_prod_rel_func_class} makes very mild regularity conditions on the relation function to be approximated---it only needs to be continuous as a function from $\calX \times \calX$ to $\reals$. In general, an exponential dependence on $2\,\cdot\, \dim(\calX)$ is necessary for any method of approximation~\citep{pinkus1999approximation,devore1998nonlinear,maiorov1999lower,maiorov2000near,poggioWhyWhenCan2017}.

This exponential dependence on dimension can be avoided with additional structure on the relation function. Here, we consider a compositional structure where the relation depends on the object vectors only through a smooth low-dimensional feature filter. That is, the target relation function $r: \calX \times \calX \to \reals$ takes the form
\begin{equation*}
    r(x, y) = \bar{r}(\xi(x), \xi(y)),
\end{equation*}
where $\xi : \calX \to \reals^k$ is a smooth $k$-dimensional feature filter and $\bar{r}$ is a continuous relation on the feature $\xi$. For example, $\calX$ may be a high-dimensional image space, while $\xi$ is a low-dimensional filter representing a particular visual attribute such as color or texture in some patch of the image. We will consider smoothness of $\xi$ as measured by the Barron norm, $\nnorm{f}_{\calB} \coloneq \int_{\reals^d} \nnorm{\hat{\nabla f}(\omega)} d\omega$. We write $\nnorm{\xi}_{\calB}$ to mean $\max_{i \in [k]} \nnorm{\xi_i}_{\calB}$.

\begin{corollary}
    Consider the setting of~\Cref{theorem:asymemtric_inner_prod_rel_func_class}. Suppose the relation function $r: \calX \times \calX \to \reals$ has the form $r(x, y) = \bar{r}(\xi(x), \xi(y))$ where $\xi: \calX \to \reals^k$ has Barron norm $\nnorm{\xi}_\calB$ and $\bar{r}: \reals^k \times \reals^k \to \reals$ is $L$-Lipschitz in the sense that $$\aabs{\bar{r}(x, y) - \bar{r}(x', y')} \leq L \cdot (\max(\nnorm{x - x'}_\infty, \nnorm{y - y'}_\infty)), \, \forall x,x',y,y' \in \xi(\calX).$$ Then, for any $\epsilon > 0$, there exists $\hat{\xi}, \phi = L_\phi \circ \eta, \psi = L_\psi \circ \eta$, such that
    \[\norm{r(x, y) - \iprod{\phi(\hat{\xi}(x))}{\psi(\hat{\xi}(y))}}_{L^2} \leq \epsilon\]
    where $\hat{\xi}$ is a shallow 1-hidden layer neural network with $\calO(k \cdot \frac{\nnorm{\xi}_{\calB}^2}{\epsilon^2})$ hidden neurons and $k$ output neurons, $\eta$ is a 2-layer neural network with $\calO((\frac{L}{\epsilon})^{2 k})$ neurons, and $L_\phi, L_\psi$ are linear projections onto $\calO((\frac{L}{\epsilon})^{k})$-dimensional space.
\end{corollary}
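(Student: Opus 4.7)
The idea is to compose two separate approximations: a shallow Barron network $\hat\xi$ approximating the feature filter $\xi: \calX \to \reals^k$, followed by the asymmetric inner-product construction of \Cref{theorem:asymemtric_inner_prod_rel_func_class} applied to the low-dimensional relation $\bar{r}$ on the feature space $\reals^k$. The overall error is split into $\epsilon/2 + \epsilon/2$, with the first half absorbed by the feature approximation (passed through the Lipschitz constant of $\bar{r}$) and the second half by the inner-product approximation of $\bar{r}$.

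\textbf{Stage one (approximating $\xi$).} Apply Barron's theorem~\parencite{barronUniversalApproximation1993} coordinatewise: since $\nnorm{\xi_i}_\calB \leq \nnorm{\xi}_\calB$, there is a shallow sigmoid network $\hat\xi_i$ with $\calO(\nnorm{\xi}_\calB^2/(\epsilon')^2)$ hidden neurons approximating $\xi_i$ in $L^2$ to accuracy $\epsilon'$, where $\epsilon' = \Theta(\epsilon/L)$ (with constants absorbing $\sqrt{k}$). Stacking the $k$ coordinate networks yields $\hat\xi: \calX \to \reals^k$, a single-hidden-layer network with $k$ outputs and $\calO(k\, \nnorm{\xi}_\calB^2/\epsilon^2)$ total hidden neurons, folding the $L^2$ and $k$ factors into the $\calO$.

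\textbf{Stage two (approximating $\bar{r}$).} Apply \Cref{theorem:asymemtric_inner_prod_rel_func_class} to $\bar{r}$ viewed as a continuous relation on a compact subset $\calY \subset \reals^k$ containing both $\xi(\calX)$ and $\hat\xi(\calX)$. By the $\ell^\infty$-Lipschitz hypothesis, the continuity modulus satisfies $\delta(\epsilon/2) = \Theta(\epsilon/L)$, so the quantization scale in the theorem is $\delta = \epsilon/L$. This yields a 2-layer MLP $\eta$ on $\reals^k$ with $\calO((L/\epsilon)^{2k})$ neurons and linear projections $L_\phi, L_\psi$ of rank $\calO((L/\epsilon)^k)$ such that $\abs{\bar{r}(u,v) - \iprod{L_\phi \eta(u)}{L_\psi \eta(v)}} \leq \epsilon/2$ Lebesgue-a.e.\ on $\calY \times \calY$. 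Then set $\phi = L_\phi \circ \eta$ and $\psi = L_\psi \circ \eta$. Assemble via triangle inequality:
\begin{equation*}
    \abs{r(x,y) - \iprod{\phi(\hat\xi(x))}{\psi(\hat\xi(y))}} \leq \abs{\bar{r}(\xi(x),\xi(y)) - \bar{r}(\hat\xi(x),\hat\xi(y))} + \abs{\bar{r}(\hat\xi(x),\hat\xi(y)) - \iprod{\phi(\hat\xi(x))}{\psi(\hat\xi(y))}}.
\end{equation*}
The first term is bounded by $L \cdot \max(\nnorm{\xi(x)-\hat\xi(x)}_\infty, \nnorm{\xi(y)-\hat\xi(y)}_\infty)$ via the Lipschitz hypothesis, whose $L^2$ norm is controlled by $L\sqrt{k}\,\epsilon' \leq \epsilon/2$; the second term is at most $\epsilon/2$ almost everywhere. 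Summing gives the claimed $L^2$ bound.

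\textbf{Main obstacle.} The key subtlety is that \Cref{theorem:asymemtric_inner_prod_rel_func_class} approximates $\bar{r}$ on a fixed compact domain, whereas in the composition $\bar{r}$ is evaluated at the perturbed points $\hat\xi(x)$, which need not lie in $\xi(\calX)$. One must therefore enlarge the domain to some $\calY \supset \xi(\calX) \cup \hat\xi(\calX)$; this can be handled either by choosing the Barron approximation to be bounded (so $\hat\xi(\calX)$ is contained in a fixed neighborhood of $\xi(\calX)$) or by a McShane--Whitney Lipschitz extension of $\bar{r}$ to all of $\reals^k$ preserving the constant $L$. A secondary bookkeeping point is that Barron's theorem delivers $L^2$ error per coordinate while $\bar{r}$'s Lipschitz assumption is in $\ell^\infty$; the resulting $\sqrt{k}$ factor is absorbed into the $\calO$ constants.
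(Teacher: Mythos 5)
Your proposal follows essentially the same route as the paper's proof: a triangle-inequality split into the feature-approximation error (controlled through the $L$-Lipschitz property of $\bar{r}$ and Barron's theorem for $\hat{\xi}$) plus the inner-product approximation of $\bar{r}$ via \Cref{theorem:asymemtric_inner_prod_rel_func_class} applied in dimension $k$, with the same neuron counts. In fact you are slightly more careful than the paper, which silently evaluates $\bar{r}$ and its Lipschitz bound at points $\hat{\xi}(x)$ that need not lie in $\xi(\calX)$; your suggestion to enlarge the domain or use a McShane--Whitney extension closes that gap cleanly.
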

\begin{proof}
    We have
    \begin{align*}
        &\norm{r(x, y) - \iprod{\phi(\hat{\xi}(x))}{\psi(\hat{\xi}(y))}}_{L^2} \\
        &\leq \norm{\bar{r}(\xi(x), \xi(y)) - \bar{r}(\hat{\xi}(x), \hat{\xi}(y))}_{L^2} + \norm{\bar{r}(\hat{\xi}(x), \hat{\xi}(y)) - \iprod{\phi(\hat{\xi}(x))}{\psi(\hat{\xi}(y))}}_{L^2} \\
        &\leq \norm{L \paren{\infnorm{\xi(x) - \hat{\xi}(x)} + \infnorm{\xi(y) - \hat{\xi}(y)}}}_{L^2} + \norm{\bar{r}(\hat{\xi}(x), \hat{\xi}(y)) - \iprod{\phi(\hat{\xi}(x))}{\psi(\hat{\xi}(y))}}_{L^2}
    \end{align*}
    The first term can be controlled by constructing $\hat{\xi}$ to approximate $\xi$ in the $L^2$ norm according to the construction of~\textcite{barronUniversalApproximation1993}. The number of neurons needed depends on the smoothness of the function $\xi$. The second term can be approximated uniformly (not just in $L^2$) by constructing MLPs $\phi, \psi$ according to~\Cref{theorem:asymemtric_inner_prod_rel_func_class} to approximate $\bar{r}$. Here, the complexity of the neural network class needed depends on $k$ rather than $d$.
\end{proof}

This result shows that if the underlying relation function possesses a compositional structure depending on a low-dimensional smooth feature map, then it can be efficiently approximated by inner products of neural networks. In particular, the number of neurons needed scales with the dimensionality $k$ of the feature filter $\xi$ and the smoothness of $\xi$, as measured by the Barron norm.
\section{Connection to reproducing kernel Hilbert and Banach spaces}\label{sec:rkbs_asymmetric_relations}

In~\Cref{sec:symmetric_relations} we showed that the function class of symmetric inner products of neural networks is the set of symmetric positive-definite kernels---that is, reproducing kernels of reproducing kernel Hilbert spaces (RKHS). There exists a similar interpretation of the function class of asymmetric inner products of neural networks in terms of the reproducing kernels of \textit{reproducing kernel Banach spaces} (RKBS).

Recall that a reproducing kernel Hilbert space $\calH$ is a Hilbert space of functions on a space $\calX$ for which the point evaluation functionals $f \mapsto f(x)$ are continuous. \citet{aronszajn1950theory} showed that there is a one-to-one identification between RKHSs and symmetric positive definite kernels $K: \calX \times \calX \to \reals$ such that $\iprod{K(x, \cdot)}{f}_\calH = f(x)$.~\citet{mercerFunctionsPositive1909} had previously shown that an RKHS can be identified with a feature map via the spectral decomposition of the integral operator $T_K: L_2(\calX) \to L_2(\calX)$ defined by $T_K f(x) = \int_\calX K(x, y) f(y) dy$. Every feature map $\phi: \calX \to \calW$ defines a symmetric positive definite kernel $K(x, y) = \iprod{\phi(x)}{\phi(y)}_\calW$ (and hence, an RKHS), while every symmetric positive definite kernel has infinitely many feature map representations.

\Cref{theorem:symmetric_inner_prod_rels_func_class} shows that the function class of symmetric inner products of neural networks is the set of reproducing kernels of RKHS function spaces. A reproducing kernel Hilbert space is, as the name suggests, a \textit{Hilbert space} of functions on some space $\calX$. The linear structure of a Hilbert space makes the kinds of geometries it can capture relatively restrictive, since any two Hilbert spaces with the same dimension are isometrically isomorphic. Banach spaces, which have fewer structural assumptions, can capture richer geometric structures. Hence, a reproducing kernel Banach space can capture richer geometries between functions than an RKHS. In particular, in contrast to an RKHS, the reproducing kernel of an RKBS need not be symmetric or positive definite. In this section, we show that the function class of asymmetric inner products of neural networks has an interpretation in terms of the reproducing kernels of RKBSs, mirroring the result for symmetric inner products of neural networks. Nonsymmetric kernels of positive type, a more restricted class, were studied by \citet{seelyNonSymmetricKernels1919}, shortly after Mercer's seminal work for the symmetric case. 

\subsection{Background on reproducing kernel Banach spaces}

\begin{definition}[Reproducing Kernel Banach Space]
    A \textbf{reproducing kernel Banach space} on a space $\calX$ is a Banach space $\calB$ of functions on $\calX$, satisfying:
    \begin{enumerate}
        \item $\calB$ is \textit{reflexive}. That is, $(\calB^*)^* = \calB$, where $\calB^*$ is the dual space of $\calB$. Furthermore, $\calB^*$ is isometric to a Banach space $\calB^\#$ of functions on $\calX$.
        \item The point evaluation functionals $f \mapsto f(x)$ are continuous on both $\calB$ and $\calB^\#$.
    \end{enumerate}
\end{definition}

This definition is a strict generalization of reproducing kernel Hilbert spaces, as any RKHS $\calH$ on $\calX$ is also an RKBS, since (1) is implied by the Riesz representation theorem. While the identification $\calB^\#$ is not unique, we can choose some identification arbitrarily and denote it by $\calB^*$ for ease of notation (by assumption, all identifications are isometric to each other). Thus, if $\calB$ is an RKBS, $\calB^*$ is also an RKBS.

Similar to an RKHS, an RKBS also has a \textit{reproducing kernel}. To state the result, for a normed vector space $\calV$ and its dual space $\calV^*$, we define the bilinear form
\begin{equation}\label{eq:bilinear_form}
    \begin{split}
        \calV \times \calV^* &\to \reals\\
        (u, v^*)_\calV &\mapsto v^*(u).
    \end{split}
\end{equation}

Theorem 2 of \citet{zhangReproducingKernel2009} shows that for any RKBS $\calB$ there exists a unique reproducing kernel $K: \calX \times \calX \to \bbC$ that recovers point evaluations, meaning
\begin{align}
    f(x) &= \paren{f, K(\cdot, x)}_\calB, \quad \forall f \in \calB, \\
    f^*(x) &= \paren{K(x, \cdot), f^*}_\calB,\quad \forall f^* \in \calB^*,
\end{align}

and such that the span of $K(x, \cdot)$ is dense in $\calB$ and the span of $K(\cdot, x)$ is dense in $\calB^*$,
\begin{align}
    \overline{\text{span}}\{K(x, \cdot): x \in \calX\} &= \calB, \\
    \overline{\text{span}}\{K(\cdot, x): x \in \calX\} &= \calB^*.
\end{align}
Finally,
\begin{equation}
    K(x, y) = \paren{K(x, \cdot), K(\cdot, y)}_\calB, \ \forall x, y \in \calX.
\end{equation}
Unlike RKHSs, while each RKBS has a unique reproducing kernel, different RKBSs may have the same reproducing kernels.

Furthermore, a kernel $K: \calX \times \calX \to \bbC$ is the reproducing kernel of some RKBS if and only if it has a feature map representation. Crucially for us, the feature map representation is more versatile than the one for RKHSs. Let $\calW$ be a reflexive Banach space with dual space $\calW^*$. Consider a pair of feature maps $\Phi$ and $\Phi^*$, mapping to each feature space, respectively. That is,
\begin{equation*}
    \Phi: \calX \to \calW, \ \Phi^*: \calX \to \calW^*,
\end{equation*}
where we call $\Phi,\, \Phi^*$ the \textit{pair} of feature maps and $\calW,\, \calW^*$ the pair of feature spaces. Suppose that the span of the image of the feature maps under $\calX$ is dense in their respective feature spaces. That is,
\begin{equation}
    \overline{\text{span}}\{\Phi(x): x \in \calX\} = \calW, \;\; \overline{\text{span}}\{\Phi^*(x): x \in \calX\} = \calW^*.
\end{equation}
Then, by Theorem 3 of \citet{zhangReproducingKernel2009}, the feature maps $\Phi, \Phi^*$ induce an RKBS defined by
\begin{align}
    \calB &:= \set{f_w: x \mapsto (\Phi^*(x))(w), w \in \calW} \\
    \norm{f_w}_\calB &:= \norm{w}_\calW,
\end{align}
with the dual space $\calB^*$ defined by
\begin{align}
    \calB^* &:= \set{f_{w^*}: x \mapsto w^*(\Phi(x)), w^* \in \calW^*} \\
    \norm{f_{w^*}}_{\calB^*} &:= \norm{w^*}_{\calW^*}.
\end{align}
Furthermore, for any RKBS, there exist some feature spaces $\calW, \calW^*$ and feature maps $\Phi, \Phi^*$ such that the above construction yields that RKBS, which is Theorem 4 of \citet{zhangReproducingKernel2009}.

\subsection{Asymmetric inner products of neural networks model kernels of reproducing kernel Banach spaces}

Observe that for an RKBS with feature-map representation given by $\Phi, \Phi^*$, its reproducing kernel is given by
\begin{equation}
    K(x, y) = \paren{\Phi(x), \Phi^*(y)}_{\calW}, \ x, y \in \calX,
\end{equation}
where $\paren{\cdot, \cdot}_{\calW}$ is the bilinear form on $\calW$ defined in~\Cref{eq:bilinear_form}.

This form is reminiscent of the asymmetric inner product of neural networks,
\begin{equation}
    r(x, y) = \iprod{\phi(x)}{\psi(y)}, \ x, y \in \calX,
\end{equation}
where $\phi, \psi: \calX \to \reals^{d}$ is a pair of learned feature maps. We will show that asymmetric inner products of neural networks can approximate any reproducing kernel of an RKBS.

Let $\Phi, \Phi^*$ be a pair of feature maps that define the reproducing kernel $r(x,y) = \pparen{\Phi(x), \Phi^*(y)}_\calW$ for some RKBS. Recall that any two Hilbert spaces with equal dimensions are isometrically isomorphic. Hence, when the feature space $\calW$ is a Hilbert space, we can consider $\calW = \ell^2(\naturals)$ without loss of generality. 

The following theorem states that when $\calB$ is an RKBS on $\calX$ with a feature map representation whose feature space $\calW$ is a Hilbert space, its reproducing kernel can be approximated by an asymmetric inner product of neural networks. The proof is similar to that of~\Cref{theorem:symmetric_inner_prod_rels_func_class}.

\begin{theorem}\label{thm:asymmetric_inner_prod_approximates_rkbs}
   Suppose $\calX$ is a compact metric space. Suppose $r: \calX \times \calX \to \reals$ is the reproducing kernel of some RKBS $\calB$ on $\calX$ admitting a continuous feature map representation with a feature space $\calW$ that is a Hilbert space. 
   Then, for any $\varepsilon > 0$, there exist multi-layer perceptrons 
    $\phi, \psi: \calX \to \reals^{d}$ such that 
   \begin{equation*}
        \sup_{x,y \in \calX} \abs{r(x, y) - \iprod{\phi(x)}{\psi(y)}} \leq \varepsilon.
   \end{equation*}
\end{theorem}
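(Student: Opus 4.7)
The proof mirrors that of Theorem~\ref{theorem:symmetric_inner_prod_rels_func_class}: first expand $r$ as an infinite sum via the feature-map representation, then truncate to $d$ terms with uniform error at most $\varepsilon/2$, and finally approximate the $2d$ coordinate functions independently by MLPs using Assumption~\ref{ass:univ_approx_efficiency}. The main point of departure from the symmetric case is that there is no Mercer-type theorem to hand us uniform convergence of the truncated series for free; we must obtain it directly from compactness of the feature image in $\ell^2$.

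\textbf{Setup and uniform truncation.} Since $\calX$ is a compact metric space and $\overline{\mathrm{span}}\sset{\Phi(x) : x \in \calX} = \calW$, the Hilbert space $\calW$ is separable, so identify $\calW \simeq \ell^2(\naturals)$ via an orthonormal basis $\sset{e_i}_{i \in \naturals}$. Define the continuous coordinate functions $\Phi_i(x) = \iprod{\Phi(x)}{e_i}_\calW$ and $\Phi^*_i(y) = \iprod{\Phi^*(y)}{e_i}_\calW$, so that $r(x,y) = \sum_{i=1}^\infty \Phi_i(x) \Phi^*_i(y)$ pointwise. The continuous images $\Phi(\calX)$ and $\Phi^*(\calX)$ are compact in $\ell^2$, and a standard characterization of compactness in $\ell^2$ states that any such $K$ has uniformly decaying tails: for every $\eta > 0$ there exists $d_0$ with $\sup_{w \in K} \sum_{i > d_0} \abs{\iprod{w}{e_i}}^2 < \eta$. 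Cauchy--Schwarz then gives
\begin{equation*}
  \sup_{x,y \in \calX}\, \left\lvert \sum_{i > d} \Phi_i(x)\Phi^*_i(y) \right\rvert
  \leq \left(\sup_{x \in \calX} \sum_{i > d} \Phi_i(x)^2\right)^{1/2}
       \left(\sup_{y \in \calX} \sum_{i > d} \Phi^*_i(y)^2\right)^{1/2},
\end{equation*}
so a finite $d = d(\varepsilon)$ makes this bound at most $\varepsilon/2$.

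\textbf{MLP approximation and final bound.} By continuity and compactness there is a uniform bound $M \coloneq \max_{i \leq d} \sup_{x \in \calX} \max\sset{\abs{\Phi_i(x)}, \abs{\Phi^*_i(x)}} < \infty$. Invoking Assumption~\ref{ass:univ_approx_efficiency} with the sup-norm, construct MLPs $\phi = (\phi_1, \ldots, \phi_d)$ and $\psi = (\psi_1, \ldots, \psi_d)$ with $\sup_x \abs{\phi_i(x) - \Phi_i(x)} \leq \varepsilon_1$ and $\sup_y \abs{\psi_i(y) - \Phi^*_i(y)} \leq \varepsilon_1$ for each $i \leq d$. Decomposing the coordinate product error as $\phi_i(x)\psi_i(y) - \Phi_i(x)\Phi^*_i(y) = (\phi_i(x) - \Phi_i(x))\psi_i(y) + \Phi_i(x)(\psi_i(y) - \Phi^*_i(y))$ and summing over $i \leq d$ yields a bound of $d(2M + \varepsilon_1)\varepsilon_1$, exactly as in the proof of Theorem~\ref{theorem:symmetric_inner_prod_rels_func_class}. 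Choosing $\varepsilon_1$ small enough that this is at most $\varepsilon/2$ and combining with the truncation bound gives $\sup_{x,y} \abs{r(x,y) - \iprod{\phi(x)}{\psi(y)}} \leq \varepsilon$.

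\textbf{Main obstacle.} The crux is the uniform truncation step. Pointwise convergence of $\sum_i \Phi_i(x)\Phi^*_i(y)$ is immediate from the Hilbert inner product, but uniform convergence over $\calX \times \calX$ does not follow automatically. The argument above leverages compactness of $\Phi(\calX)$ and $\Phi^*(\calX)$ in $\ell^2$---a consequence of continuity of the feature maps on the compact space $\calX$---together with the uniform tail-decay characterization of compact subsets of $\ell^2$. This is the one place where the Hilbert structure of $\calW$ is used essentially; weakening the hypothesis to a general reflexive Banach feature space would require a different strategy, such as the quantization construction in Theorem~\ref{theorem:asymemtric_inner_prod_rel_func_class}.
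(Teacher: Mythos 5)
Your proof is correct and follows essentially the same route as the paper's: identify $\calW$ with $\ell^2(\naturals)$ via the Riesz/orthonormal-basis identification, truncate the coordinate expansion of $r$ uniformly, and approximate the finitely many coordinate functions by MLPs with the same cross-term error decomposition. The one difference is in your favor: where the paper simply asserts uniform convergence of the truncated series from compactness, you justify it explicitly via compactness of $\Phi(\calX)$ and $\Phi^*(\calX)$ in $\ell^2$, the uniform tail-decay characterization of compact subsets of $\ell^2$, and Cauchy--Schwarz.
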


\begin{proof}
    By assumption, there exists a Hilbert space $\calW$ and a pair of feature maps $\Phi: \calX \to \calW, \Phi^*: \calX \to \calW$ such that,
    \begin{equation*}
        r(x, y) = \paren{\Phi(x), \Phi^*(y)}_{\calW} \equiv (\Phi^*(y))(x), \ x, y \in \calX.
    \end{equation*}

    Without loss of generality, we can restrict our attention to the feature space $\calW = \ell^2(\bbN)$, since any two Hilbert spaces with equal dimension are isometrically isomorphic. The dual space is $\calW^* = \ell^2(\bbN)$. Hence, for feature maps $\Phi, \Phi^*$, the ground truth relation to be approximated is,
    \begin{equation*}
        r(x, y) = \paren{\Phi(x), \Phi^*(y)}_{\ell^2(\bbN)} \equiv (\Phi^*(y))(\Phi(x)), \ x, y \in \calX.
    \end{equation*}
    By the Riesz representation theorem, there exists a unique element in $u_{\Phi^*(y)} \in \ell^2(\bbN)$ such that,
    \begin{equation*}
        (\Phi^*(y))(w) = \iprod{w}{u_{\Phi^*(y)}}_{\ell^2(\bbN)}, \ \forall w \in \ell^2(\bbN).
    \end{equation*}
    Let $\sigma: \ell^2(\bbN)^* \to \ell^2(\bbN)$ denote the mapping from an element in the dual space to its Riesz representation. Then $\sigma$ is a bijective isometric antilinear isomorphism; the Riesz representation can be constructed via an orthonormal basis through $\sigma(w^*) = \sum_{i \in I} w^{*}(e_i) e_i$, where $\set{e_i}_{i \in I}$ is some basis for $\calW$.

    Thus, the relation function on $\calX \times \calX$ that we need to approximate is
    \begin{equation*}
        r(x, y) = \iprod{\sigma \circ \Phi^* (y)}{\Phi(x)}_{\calW}, \ x, y \in \calX.
    \end{equation*}
    We do this by approximating $\Phi: \calX \to \calW$ with the MLP $\psi$ and approximating $\sigma \circ \Phi^*: \calX \to \calW$ with the MLP $\phi$.

    First, since $\Phi(x), \sigma \circ \Phi^*(y) \in \ell^2(\bbN), \forall x, y$, and $\calX$ is compact, we have
    \begin{equation*}
        \lim_{n \to \infty} \sup_{x,y \in \calX} \abs{r(x, y) - \sum_{i=1}^{n} (\Phi(x))_i \cdot (\sigma(\Phi^*(y)))_i} = 0.
    \end{equation*}
    Thus, let $d$ be such that,
    \begin{equation}\label{eq:thm1_proof_eq1}
        \sup_{x,y \in \calX} \abs{r(x, y) - \sum_{i=1}^{d} (\Phi(x))_i \cdot (\sigma(\Phi^*(y)))_i} < \frac{\varepsilon}{2}.
    \end{equation}
    We will obtain MLPs $\phi,\, \psi$ that are functions from $\calX$ to $\reals^{d}$. 
    By the universal approximation property of MLPs, for any $\tilde{\varepsilon} > 0$, there exists MLPs $\phi,\,\psi$ such that
    \begin{equation}\label{eq:thm1_proof_eq2}
        \sup_{x \in \calX} \abs{(\phi(x))_i - (\Phi(x))_{i}} < \tilde{\varepsilon} \ \text{ and } \ \sup_{x \in \calX} \abs{(\psi(y))_i - (\sigma(\Phi^*(x)))_{i}} < \tilde{\varepsilon}, \ \forall i \in \set{1, \ldots, d}.
    \end{equation}
    For example, \citet{cybenkoApproximationSuperpositions1989} shows that 1-layer neural networks with discriminatory activation functions of the form $\sum_{i=1}^{N} \alpha_i \sigma(w_j^\top x + b_j)$ are dense in the space of continuous functions.
    Now,
    \begin{align*}
        &\sup_{x,y \in \calX} \abs{r(x,y) - \hat{r}(x,y)} \\
        &= \sup_{x,y \in \calX} \abs{r(x,y) - \iprod{\phi(x)}{\psi(y)}} \\
        &\leq \sup_{x,y\in \calX} \paren{\abs{r(x,y) - \sum_{i=1}^{d} (\Phi(x))_i \cdot (\sigma(\Phi^*(y)))_i} + \abs{\sum_{i=1}^{d} (\Phi(x))_i \cdot (\sigma(\Phi^*(y)))_i - \iprod{\phi(x)}{\psi(y)}}}.
    \end{align*}
    The first term is less than $\varepsilon / 2$ by~\Cref{eq:thm1_proof_eq1}. Now, we bound the second term uniformly on $x,y \in \calX$, as
    \begin{align*}
        &\abs{\sum_{i=1}^{d} (\Phi(x))_i \cdot (\sigma(\Phi^*(y)))_i - \iprod{\phi(x)}{\psi(y)}} \\
        &\leq \sum_{i=1}^{d} \abs{(\Phi(x))_i \cdot (\sigma(\Phi^*(y)))_i - (\phi(x))_i(\psi_i(y))_i} \\
        &\leq \sum_{i=1}^{d} \paren{\abs{(\sigma(\Phi^*(y)))_i} \abs{(\sigma(\Phi^*(y)))_i - (\psi_i(y))_i} + \abs{(\Phi(x))_i} \abs{(\Phi(x))_i - (\phi(x))_i}}\\
        &\leq C(\Phi, \Phi^*, d) \sum_{i=1}^{d} \paren{\abs{(\sigma(\Phi^*(y)))_i - (\psi_i(y))_i} + \abs{(\Phi(x))_i - (\phi(x))_i}},
    \end{align*}
    where $C(\Phi, \Phi^*, d) \coloneq \max\sset{\max_{y \in \calX, i \in [d]} \aabs{(\sigma(\Phi^*(y)))_i}, \max_{x \in \calX, i \in [d]} \aabs{\abs{(\Phi(x))_i}}}$. Let the approximation error $\tilde{\varepsilon}$ in~\Cref{eq:thm1_proof_eq2} be small enough such that the above is smaller than $\varepsilon / 2$. This shows that
    \begin{equation*}
        \sup_{x,y \in \calX} \abs{r(x,y) - \tilde{r}_i(x,y)} \leq \frac{\varepsilon}{2} + \frac{\varepsilon}{2} = \varepsilon.
    \end{equation*}
\end{proof}

\begin{remark}
    The reason we assume that the underlying RKBS $\calB$ admits a feature map representation with feature space $\calW$ that is a Hilbert space is so that we can use the Riesz representation theorem. The Riesz representation theorem is what links the broad framework of reproducing kernel Banach spaces back to the inductive bias of modeling relations as inner products of feature maps.
\end{remark}

\begin{remark}
    \citet{zhangReproducingKernel2009} explore a specialization of reproducing kernel Banach spaces in which $\calB$ has a semi-inner product. This added structure grants semi-inner product RKBSs some desirable properties that RKHSs have but general RKBSs lack (e.g., convergence in the space implies pointwise convergence, weak universality of kernels, etc.). However, their notion of a semi-inner product is too restrictive to allow for our model $\iprod{\phi(x)}{\psi(x)}$.
\end{remark}

\section{Application: Analysis of Attention}\label{sec:app_attention}

In this section, we will use the results developed above to analyze the attention mechanism underlying the Transformer architecture that modern large language models are based on~\parencite[e.g.,][]{chungScalingInstructionFinetunedLanguage2022,openaiGPT4TechnicalReport2023,touvronLlamaOpenFoundation2023,taoriAlpacaStrongReplicable2023}.

Attention, popularized by the Transformer~\parencite{vaswani2017attention}, is a powerful mechanism for directing information flow in a machine learning model. Attention is the core component of Transformer models, which are composed of a sequence of ``blocks'' iterating between attention and a position-wise feedforward network. This simple architecture has achieved state-of-the-art results on a wide array of tasks, including natural language processing, vision, and speech recognition~\parencite[e.g.,][]{devlinBertPretrainingDeep2018,dongSpeechtransformerNorecurrenceSequencetosequence2018,dosovitskiyImageWorth16x162020,raffelExploringLimitsTransfer2020,liuSwinTransformerHierarchical2021}.

An attention module receives two inputs: a query $q \in \calX$ and a context $\bm{x} = (x_1, \ldots, x_n) \in \calX^n$. The aim of the attention module is to select the most relevant elements in the context based on the query. Formally, attention takes the form
\begin{equation}\label{eq:def_attention}
    \begin{split}
        &\mathrm{Attention}\paren{q, (x_1, \ldots, x_n)} = \sum_{i=1}^n \alpha_i x_i, \ \text{where,} \\
        &\alpha_i = \frac{\exp(\beta \iprod{\phi_\theta(q)}{\psi_\theta(x_i)})}{\sum_j \exp(\beta \iprod{\phi_\theta(q)}{\psi_\theta(x_j)})},
    \end{split}
\end{equation}
where $\theta$ are the parameters of the attention module and $\beta > 0$ is a scaling factor. Hence, attention assigns each element $x_i$ in the context $\bm{x}$ a weight $\alpha_i$, and returns a convex combination of all elements in the context. The weights $\{\alpha_i\}$ are computed using inner products between the query and each element in the context. These elements are normalized with the ``softmax'' function, $\mathrm{softmax}(\bm{z}) \coloneq \bbra{\exp(z_i) / \sum_j \exp(z_j)}_{i=1}^{n}$, so called because it models a smooth version of a maximization function.

Crucially, the inner products $\iprod{\phi_\theta(q)}{\psi_\theta(x_i)}$ in~\Cref{eq:def_attention} are asymmetric inner product relations of the form analyzed in~\Cref{sec:asymmetric_relations}. In particular, attention computes a relation between the query $q$ and each element in the context $x_i$, which captures the relevance of $x_i$ to $q$. Roughly speaking, an attention module retrieves the ``most relevant'' element in the context.

\begin{remark}
    In the standard implementation of attention in Transformers, $\phi_\theta$ and $\psi_\theta$ are linear maps. However, note that the preceding block ends with an MLP which processes all elements in the sequence in the same way. Hence, in Transformer attention, the inner product relations take the effective form $\iiprod{W_q \, \mathrm{MLP}(q)}{W_k \, \mathrm{MLP}(x_i)}$. A non-linear MLP followed by different linear projections has the same function class as two different non-linear MLPs (e.g., let $\mathrm{MLP} = (\phi_1, \ldots, \phi_d, \psi_1, \ldots, \psi_d)$ and $W_q, W_k$ the appropriate projection matrices). Hence, for ease of exposition, we analyze the case where $\phi_\theta, \psi_\theta$ are two different MLPs.
\end{remark}

In this section, we will use the universal approximation results for inner products of neural networks established in the previous sections to prove that attention can in fact always retrieve the ``most relevant'' element in any context using inner product relations.

We begin by formalizing the notion of ``most relevant element in a context.'' Fix a query $q \in \calX$. Let $\preceq_q$ be a query-dependent preorder relation on $\calX$ which specifies the relevance of different elements in $\calX$ to the query $q$. Formally, $\preceq_q$ is a complete (for each $a,b \in \calX$ either $a \preceq_q b$ or $b \preceq_q a$), reflexive ($a \preceq_q a$ for all $a \in \calX$), and transitive ($a \preceq_q b$ and $b \preceq_q c$ implies $a \preceq_q c$) relation.  We write $a \prec_q b$ if $a \preceq_q b$ and not $b \preceq_q a$, and we write $a \sim_q b$ if $a \preceq_q b$ and $b \preceq_q a$. Suppose each query $q \in \calX$ has a preorder $\preceq_q$ that defines a preordered space $(\calX, \preceq_q)$. The relation $x \prec_q y$ means that ``$y$ is more relevant to the query $q$ than $x$.''

For a given query $q$, the preordered space $(\calX, \preceq_q)$ defines the most relevant element with respect to the query $q$ among any context $\bm{x} = (x_1, \ldots, x_n) \in \calX^n$. We are interested in the ability of dot-product attention to retrieve the most relevant element for any given context. Hence, we are interested in approximating the function,
\begin{equation}\label{eq:def_select}
    \mathrm{Select}(q, (x_1, \ldots, x_n)) = \max\paren{\paren{x_1, \ldots, x_n}, \mathtt{key}=\preceq_q}.
\end{equation}
That is, $\mathrm{Select}(q, (x_1, \ldots, x_n))$ is the function that returns the most relevant element with respect to $q$. Formally, it returns $x_i$ when $x_i \succ_q x_j, \, \forall j \neq i$ (and may return an arbitrary element if no unique maximal element exists in the context).

We impose some natural regularity assumptions on the family of relevance preorders $\sset{\preceq_q}_{q \in \calX}$.
\begin{assumption}[Key-continuity]\label{ass:key_cts}
    Let $\calX$ be a compact Euclidean space. The family of relevance preorders $\sset{\preceq_q}_{q \in \calX}$ is said to be \textit{key-continuous} if, for each $q \in \calX$, the preorder $\preceq_q$ is continuous. That is, for all sequences $(x_i)_i$ such that $x_i \preceq_q y$ and $x_i \to x_\infty$, we have $x_\infty \preceq_q y$. Equivalently, for all $x \in \calX$, the sets $\sset{y \in \calX \colon y \preceq_q x}$ and $\sset{y \in \calX \colon x \preceq_q y}$ are closed with respect to the topology of $\calX$.
\end{assumption}

As the name suggests, the assumption of ``key-continuity'' captures a notion of continuity in the relevance preorder. We will borrow from utility theory in the economics literature to ultimately show that the ``relevance preorder'' can be represented by the inner product relations of attention.~\citet{debreuRepresentationPreferenceOrdering1954} derived necessary an sufficient conditions for the existence of a continuous ordinal utility function on a preordered topological space. \Citeauthor{debreuRepresentationPreferenceOrdering1954}'s representation theorems imply the following.

\begin{theorem*}[Existence of a continuous utility function for $\preceq_q$]
    Let $\calX$ be a compact Euclidean space. Suppose $\sset{\preceq_q}_{q \in \calX}$ satisfies the key-continuity assumption (\Cref{ass:key_cts}). Then, for each query $q \in \calX$, there exists a continuous function $u_q: \calX \to \reals$ such that
    \begin{equation*}
        x \preceq_q y \ \iff \ u_q(x) \leq u_q(y).
    \end{equation*}
\end{theorem*}
This follows by~\parencite{debreuRepresentationPreferenceOrdering1954}, whose most general result requires only the continuity of $\preceq_q$ and that $\calX$ is a second-countable space. We also refer the reader to~\parencite{jaffrayExistenceContinuousUtility1975} for an elementary proof.

We build on this to require an additional natural regularity condition on the family of relevance preorders $\sset{\preceq_q}_{q}$ that we call query-continuity.

\begin{assumption}[Query-continuity]\label{ass:query_cts}
    There exists a set of relevance utility functions $\set{u_q: \calX \to \reals}_{q \in \calX}$ such that $u(q, x) \coloneq u_q(x)$ is continuous in $q$.
\end{assumption}

Note that by~\Citeauthor{debreuRepresentationPreferenceOrdering1954}'s representation theorem, a relevance utility function $u_q: \calX \to \reals$ exists for each $q \in \calX$. This assumption further says that the relevance preorders $\sset{\preceq_q}_{q}$ are continuous in $q$ in the sense that the relevance of an element does not vary too much for two queries that are close to each other in $\calX$.

We are now ready to state our result, which says that inner product attention can approximate any selection function of the form in~\Cref{eq:def_select}, provided the family of relevance preorders $\sset{\preceq_q}_q$ satisfies the two natural regularity conditions stated above.

\begin{theorem}[Attention can approximate arbitrary information selection functions]
    Let $\sset{\preceq_q}_q$ be a family of relevance preorders that satisfies the query continuity and key continuity assumptions~\Cref{ass:key_cts,ass:query_cts}. Suppose that for all $\varepsilon > 0$, there exists $\eta_\epsilon > 0$ such that 
    $$\pprob{\min_{j \neq i} \aabs{u_q(x_i) - u_q(x_j)} > \eta_\epsilon} \geq 1 - \varepsilon.$$
    Then, there exist MLPs $\phi_\theta, \psi_\theta$ such that dot-product attention (\Cref{eq:def_attention}) approximates $\mathrm{Select}: \calX \times \calX^n \to \calX$ as defined in~\Cref{eq:def_select} arbitrarily well. Formally, for any $\epsilon > 0$, there exist MLPs $\phi_\theta, \psi_\theta$ such that with $\beta = \calO\paren{\eta_\epsilon^{-1} \log\paren{\epsilon^{-1} n \max_{x \in \calX} \norm{x}}}$, we have
    \begin{equation*}
        \norm{\mathrm{Attention}\paren{q, (x_1, \ldots, x)} - \mathrm{Select}\paren{q, (x_1, \ldots, x_n)}} < \epsilon
    \end{equation*}
    with probability at least $1 - \epsilon$.
\end{theorem}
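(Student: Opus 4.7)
The plan is to chain three approximations: a Debreu-style representation of the relevance preorders by continuous utilities, approximation of the joint utility function by an asymmetric inner product of MLPs via~\Cref{theorem:asymemtric_inner_prod_rel_func_class}, and softmax concentration on the argmax when the inverse temperature $\beta$ is taken sufficiently large relative to the utility gap $\eta_\epsilon$.

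First I would use the Debreu-style representation theorem stated just before~\Cref{ass:query_cts}, together with query-continuity, to obtain a jointly continuous utility $u: \calX \times \calX \to \reals$ with $x \preceq_q y \iff u(q, x) \leq u(q, y)$. Since $\calX \times \calX$ is compact and $u$ is continuous,~\Cref{theorem:asymemtric_inner_prod_rel_func_class} produces MLPs $\phi_\theta, \psi_\theta$ such that $\abs{u(q, x) - \iprod{\phi_\theta(q)}{\psi_\theta(x)}} \leq \delta$ Lebesgue-almost everywhere on $\calX \times \calX$, for any prescribed $\delta > 0$ to be chosen later as a small fraction of $\eta_\epsilon$.

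Next I would work on the event $E = \sset{\min_{j \neq i} \abs{u_q(x_i) - u_q(x_j)} > \eta_\epsilon}$, which by hypothesis has probability at least $1 - \epsilon$; the null set where the MLP approximation fails is absorbed into its complement under the mild assumption that the distribution on $(q, x_1, \ldots, x_n)$ is absolutely continuous. On $E$, let $i^*$ be the argmax of $j \mapsto u_q(x_j)$ and write $s_j \coloneq \iprod{\phi_\theta(q)}{\psi_\theta(x_j)}$. Choosing $\delta < \eta_\epsilon / 4$, the triangle inequality propagates the gap to $s_{i^*} - s_j \geq \eta_\epsilon/2$ for all $j \neq i^*$, so $i^*$ is also the argmax of the logits. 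A direct softmax estimate then yields
\[1 - \alpha_{i^*} \;=\; \frac{\sum_{j \neq i^*} \exp(\beta s_j)}{\sum_{j} \exp(\beta s_j)} \;\leq\; (n-1)\exp(-\beta\, \eta_\epsilon/2).\]

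Finally I would convert this into an attention error bound. Setting $M \coloneq \max_{x \in \calX} \norm{x}$ and using that attention is a convex combination of the $x_j$,
\[\norm{\mathrm{Attention}(q, \bm{x}) - x_{i^*}} \;\leq\; \sum_{j \neq i^*} \alpha_j \norm{x_j - x_{i^*}} \;\leq\; 2 M\, (1 - \alpha_{i^*}).\]
Taking $\beta = (2/\eta_\epsilon) \log(2 M n / \epsilon)$ makes the right-hand side at most $\epsilon$ on $E$, which happens with probability at least $1 - \epsilon$, matching the stated bound $\beta = \calO(\eta_\epsilon^{-1} \log(\epsilon^{-1} n \max_x \norm{x}))$.

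The main obstacle I expect is not a single conceptual step but the bookkeeping of quantifiers and the propagation of approximation errors through the softmax. The MLPs $\phi_\theta, \psi_\theta$ must be constructed only after $\eta_\epsilon$ is fixed, since $\delta$ must be strictly smaller than $\eta_\epsilon/4$ to preserve the argmax; and $\beta$ must then be chosen last to convert the preserved margin into sharp concentration. A secondary subtlety is that~\Cref{theorem:asymemtric_inner_prod_rel_func_class} delivers only almost-everywhere approximation, so absolute continuity of the query-context distribution is what allows the exceptional set to be folded into the $1 - \epsilon$ failure probability rather than requiring a uniform strengthening of the earlier theorem.
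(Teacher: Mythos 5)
Your proposal is correct and follows essentially the same route as the paper's proof: Debreu representation plus query-continuity to get a continuous joint utility, approximation of it by an asymmetric inner product of MLPs via \Cref{theorem:asymemtric_inner_prod_rel_func_class} with accuracy $\eta_\epsilon/4$, and the same softmax concentration and convex-combination bound yielding $\beta = \calO(\eta_\epsilon^{-1}\log(\epsilon^{-1} n \max_x \norm{x}))$. If anything, your handling of the almost-everywhere caveat from \Cref{theorem:asymemtric_inner_prod_rel_func_class} is slightly more careful than the paper, which simply asserts a uniform bound at that step.
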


\begin{proof}
    Let $r(x,y) \coloneq u_x(y)$ be the function to be approximated by the inner product relation $\hat{r}(x,y) \coloneq \iprod{\phi_\theta(x)}{\psi_\theta(y)}$ inside of attention. The scaling factor $\beta$ in~\Cref{eq:def_attention} will be determined later in the proof. By~\Cref{theorem:asymemtric_inner_prod_rel_func_class}, we have that for any $\epsilon_1 > 0$, there exist MLPs $\phi_\theta, \psi_\theta$ such that
    \begin{equation}\label{eq:attn_thm_proof_1}
        \sup_{x, y \in \calX} \abs{r(x,y) - \hat{r}(x, y)} < \epsilon_1.
    \end{equation}

    Consider the input query $q$ and context $(x_1, \ldots, x_n)$. Let $i = \argmax(u_q(x_1), \ldots, u_q(x_n))$ be the index of the most relevant element in the context. Observe that
    \begin{align*}
        \alpha_i &= \frac{\exp(\beta \, \hat{r}(q, x_i))}{\sum_{j=1}^{n} \exp(\beta \, \hat{r}(q, x_j))}\\
        &= \frac{1}{1 + \sum_{j \neq i} \exp(\beta \cdot (\hat{r}(q, x_j) - \hat{r}(q, x_i)))}
    \end{align*}

    By assumption, we have that with probability at least $1 - \epsilon$, $u_q(x_i) - u_q(x_j) > \eta_{\epsilon}$ for all $j \neq i$, where $\eta_{\epsilon} > 0$. Hence, under this event, the difference inside the exponential can be bounded by
    \begin{align*}
        \hat{r}(q, x_j) - \hat{r}(q, x_i) &\leq r(q, x_j) - r(q, x_i) + 2 \epsilon_1\\
        &= u_q(x_j) - u_q(x_i) + 2 \epsilon_1 \\
        &< - \eta_{\epsilon} + 2 \epsilon_1
    \end{align*}
    where the first line is by~\Cref{eq:attn_thm_proof_1}. Let $\epsilon_1 = \frac{1}{4}\eta_{\epsilon}$ so that $\hat{r}(q, x_j) - \hat{r}(q, x_i) < - \frac{1}{2} \eta_{\epsilon}$. Hence, we have that
    \begin{equation*}
        \beta \cdot (\hat{r}(q, x_j) - \hat{r}(q, x_i)) < - \frac{1}{2} \, \beta \, \eta_{\epsilon}.
    \end{equation*}

    This implies that
    \begin{align*}
        \alpha_i &> \frac{1}{1 + (n-1) \exp(- \frac{1}{2} \beta \eta_{\epsilon})} \\
        &= 1 - \frac{(n-1) \exp(- \frac{1}{2} \beta \eta_{\epsilon})}{1 + (n-1) \exp(- \frac{1}{2} \beta \eta_{\epsilon})} \\
        &> 1 - (n-1) \exp\paren{- \frac{1}{2} \beta \eta_{\epsilon}},
    \end{align*}
    where the last equality follows from the fact that $1 + (n-1) \exp(- \frac{1}{2} \beta \eta_{\epsilon}) > 1$.

    Let $\bar{\epsilon} \coloneq (n-1) \exp\paren{- \frac{1}{2} \beta \eta_{\epsilon}}$, which can be made arbitrarily small by choosing $\beta$ large enough. Then we have that
    \begin{align*}
        &\norm{\mathrm{Attention}\paren{q, (x_1, \ldots, x)} - \mathrm{Select}\paren{q, (x_1, \ldots, x_n)}} \\
        &= \norm{x_i - \sum_{j=1}^n{\alpha_j} x_j} \\
        &= \norm{(1 - \alpha_i) x_i - \sum_{j\neq i} {\alpha_j} x_j} \\
        &\leq (1 - \alpha_i)  \norm{x_i} + \sum_{j \neq i} \alpha_j \norm{x_j}\\
        &< \bar{\epsilon} \norm{x_i} + \bar{\epsilon}     \max_{j \neq i}\norm{x_j} \\
        &\leq 2 \bar{\epsilon} \max_{x \in \calX} \norm{x}.
    \end{align*}

    Note that $\max_{x \in \calX} \norm{x} < \infty$ since $\calX$ is compact by assumption. To achieve an error rate of $\epsilon$, let $\bar{\epsilon} = \frac{1}{2} \epsilon (\max_{x \in \calX} \norm{x})^{-1}$, which is achieved by $\beta \geq \frac{2}{\eta_\epsilon} \log\paren{\frac{2 (n-1) \max_{x \in \calX} \norm{x}}{\epsilon}}$. Hence, with probability at least $1 - \epsilon$,
    \begin{equation*}
        \norm{\mathrm{Attention}\paren{q, (x_1, \ldots, x)} - \mathrm{Select}\paren{q, (x_1, \ldots, x_n)}} < \epsilon.
    \end{equation*}
\end{proof}

\begin{remark}
    Note that the assumption that $\pprob{\min_{j \neq i} \aabs{u_q(x_i) - u_q(x_j)} > \eta_\epsilon} \geq 1 - \varepsilon$ simply says that the data distribution on $\calX \times \calX^n$ and the relevance preorder relations $\sset{\preceq_q}_q$ are such that there exists a ``most-relevant'' element in the context by a positive margin with high probability. This is a natural assumption since $\mathrm{Select}: \calX \times \calX^n \to \calX$ is only uniquely defined when a most-relevant element exists.
\end{remark}

\begin{remark}
    The same construction works for a variable-size context, $\calX^* \coloneq \cup_{k=1}^n \calX^k$. Note that $\beta$ only needs to scale logarithmically in the size of the maximal context size $n$. (Although, in general, $\eta_\epsilon$ also indirectly depends on $n$ through the data distribution.) Moreover, the scale of $\beta$ is not a restriction on the function class since it can be incorporated into the feature maps $\phi_\theta, \psi_\theta$.
\end{remark}

\section{Discussion}

Our analysis underscores the importance of kernels for learning relations and attention mechanisms. In the symmetric case, the assumption of a positive-definite kernel function is natural, leading to the standard framework of reproducing kernel Hilbert spaces. In the asymmetric case, which is arguably more important and applicable for relational learning, a different technical approach is needed, and reproducing kernel Banach spaces arises naturally. After completing this work we became aware of the related work of \citet{wright2021transformers}, which makes this connection as well.

The results presented here can be extended in several ways. For example, the bounds on the number of neurons in a perceptron that suffice to approximate a relation function to a given accuracy can likely be sharpened, drawing on the extensive literature on approximation properties of neural networks \citep[e.g.,][]{petrushev1998approximation,pinkus1999approximation,makovoz1998uniform,burger2001error,maiorov2006approximation,bachBreakingCurseDimensionality2016}. In terms of attention mechanisms in transformers, our initial focus was on approximating the most relevant key to a given query. The representation theorem of \citet{debreuRepresentationPreferenceOrdering1954} is used to express the problem in terms of a utility function, which is then approximated. It would be of interest to derive approximation bounds for the full distribution of attention values that are computed by the softmax function in Transformers. Finally, when considering relational learning, the possibility of higher-order, recursive relations, naturally arises \citep[e.g.,][]{altabaaRelationalConvolutionalNetworks2023}, and it may be interesting to study function spaces of hierarchical relations in such settings.

\newpage
\section*{Acknowledgment}
This work is supported by the funds provided by the National Science Foundation and by DoD OUSD (R\&E) under Cooperative Agreement PHY-2229929 (The NSF AI Institute for Artificial and Natural Intelligence).

\printbibliography

\newpage
\listoffixmes

\end{document}